\documentclass[twoside,11pt]{article}

\usepackage[preprint]{jmlr2e}
\usepackage{amsmath}
\usepackage{bm}
\usepackage{booktabs}
\usepackage{microtype}
\usepackage[section]{placeins}
\graphicspath{{figures/publication/}{results/publication_robustness/}{./}}
\hypersetup{hidelinks}

\newtheorem{independentlemma}{Lemma}
\newtheorem{independentcorollary}{Corollary}

\newcommand{\R}{\mathbb{R}}
\newcommand{\E}{\mathbb{E}}
\newcommand{\Prob}{\mathbb{P}}
\newcommand{\1}{\mathbf{1}}
\newcommand{\diag}{\operatorname{diag}}
\newcommand{\tr}{\operatorname{tr}}
\newcommand{\op}{\mathrm{op}}
\newcommand{\F}{\mathrm{F}}
\newcommand{\clip}{\operatorname{clip}}
\newcommand{\Agg}{\operatorname{Agg}}

\ShortHeadings{Invariant Representation Measurements}{Karasan}
\firstpageno{1}

\begin{document}\title{Representation Measurements Under Function-Preserving Reparameterizations}

\author{\name Abdullah Karasan \email akarasan@umbc.edu\\
  \addr University of Maryland, Baltimore County\\
  1000 Hilltop Circle, Baltimore, MD 21250, USA}

\maketitle

\begin{abstract}%
Hidden coordinates are not uniquely determined by a language model's
input--output function, so representation-derived measurements should be
invariant to function-preserving changes of basis. This study shows that
column-permutation parallel analysis violates function-preserving reparameterization invariance because its
reference distribution and selected component count can change while the model
function and observed covariance spectrum remain fixed. More generally, a
data-internal reference procedure cannot simultaneously preserve every
coordinate marginal, remain orthogonally equivariant, and remove
cross-coordinate covariance. Empirically, across five models, three retrieval domains, and
75 transformations, median component-count disagreement is 0.79 and median
fixed-threshold decision disagreement is 0.26. A centering-only control
isolates the reference-driven effect, with 1,141 of 1,200 component counts changing despite an
unchanged observed spectrum, whereas independent parallel analysis seeds change none of the
corresponding decisions. By contrast, orthogonally invariant comparator scores remain numerically stable with
similar held-out discrimination. Together, these results show that parallel analysis-derived component
counts and decisions can reflect hidden-coordinate choice rather than a
well-defined property of the model.
\end{abstract}

\begin{keywords} Language model representations, neural network symmetries, function-preserving reparameterization, parallel analysis, measurement validity
\end{keywords}

\section{Introduction}
\label{sec:introduction}

Internal representations are routinely summarized by estimated ranks,
subspaces, and projection scores. Such quantities are then interpreted as
properties of a learned model or as evidence about downstream behavior.
However, hidden coordinates are generally not unique. A model can often be
reparameterized by an invertible change of basis while preserving its
input--output function. Because these parameterizations describe the same
model function, a representation statistic intended to characterize that
function should return the same value for all of them.

These summaries are not inert descriptions. A selected component count fixes
the subspace on which a probe is fitted, the dimension retained in a low-rank
approximation, and, in the pipeline studied here, whether a retrieved context
is classified as containing the evidence needed to answer a question. If the
count reflects the coordinate system rather than the model, so do the decisions
that follow from it.

This study examines that possibility in a retrieved-context measurement
pipeline. Given token representations of a context and a fixed target answer,
the pipeline uses column-permutation parallel analysis (PA) to select a context
subspace and scores the answer by its projection onto that subspace. The
resulting score distinguishes evidence-containing contexts from matched
evidence-ablated contexts. Column permutation is attractive because it
preserves each coordinate's empirical marginal distribution while disrupting
cross-coordinate dependence. That is also its vulnerability: marginals are
properties of the chosen basis, not of the model function.

The mismatch is sharp. An orthogonal reparameterization leaves the observed
covariance spectrum exactly unchanged because \(XR\) and \(X\) have identical
singular values. It nonetheless changes the diagonal of the covariance matrix,
and the permutation reference is built from precisely that diagonal. The
observed quantity and the reference against which it is compared therefore
transform differently under a change of basis that the model itself does not
distinguish.

The central question is whether the PA-based measurement remains unchanged
across functionally equivalent parameterizations of the same model. It is
addressed at three levels. First, coordinate dependence is isolated in raw
column-permutation PA without positing a unique ``true'' rank of language-model
activations. Next, the analysis tests whether this dependence persists through
the standardization and minimum-component rules of the implemented scorer.
Finally, it determines whether changes in the resulting score alter
fixed-threshold classifications of the two context conditions. The pipeline
serves as a representative case study rather than a proposed retrieval
evaluator. The requirement it is used to test, that a statistic interpreted as
a property of a model function be well-defined across function-preserving
hidden-coordinate systems, applies to representation-derived statistics more
generally.

The analysis yields four contributions.
\begin{itemize}
\item A measurement-validity criterion for representation-derived statistics.
I formalize the requirement at the exact-equality limit: a statistic interpreted
as a property of a model function must be invariant under
architecture-compatible, function-preserving reparameterization. Because such
reparameterizations hold both the input and the model output fixed, they serve
as controlled interventions in which any change in the statistic isolates
dependence on the hidden coordinate system.

\item A structural limitation of data-internal reference procedures. I prove
that no randomized reference procedure constructed solely from the observed
representation matrix can simultaneously preserve each coordinate's empirical
marginal distribution, remain equivariant under right-orthogonal
transformations, and remove cross-coordinate covariance. Column-permutation PA
is one instance of this broader incompatibility.

\item A spectrum-conditioned range bound for column-permutation PA. I derive
the expected permutation-reference covariance and show that the range of
PA-selected component counts over the orthogonal orbit admits a lower bound
computable from the observed spectrum, even though that spectrum is itself
right-orthogonally invariant. Separate constructions cover the full orthogonal
class used for the root mean square normalization (RMSNorm) models and the
mean-preserving subgroup used for the LayerNorm model. Evaluated on 3,600
context--layer spectra, the bound certifies a nonzero orbit-wide range for
15.2\% of contexts and is inconclusive, rather than contradicted, elsewhere;
the bound is a sufficient certificate rather than a sharp characterization.

\item A controlled empirical study separating reference-driven instability
from its alternatives. I show that, across five models, four architecture
families, three retrieval domains, and five reparameterizations per model,
function-preserving reparameterization changes fixed-threshold classification
of the evidence-containing and evidence-ablated conditions at a median rate of
0.26, whereas an independent-seed rerun without reparameterization changes none
of 1,200 matched decisions. A centering-only arm isolates the mechanism
directly: the observed covariance spectrum is exactly invariant, yet 1,141 of
1,200 contexts change selected component count. The effect persists across 72
combinations of permutation budget, reference quantile, selection rule, and
layer aggregation. Right-orthogonally invariant comparator scores remain
numerically unchanged and attain comparable observed median AUROC across the 15
model--domain combinations.
\end{itemize}

\section{Related Work}
\label{sec:related-work}

The relevant literature connects representation measurement to model
symmetries, dimensionality estimation, and retrieval evaluation.

\subsection{Representation Invariance and Measurement}

Neural-network symmetries provide the first foundation for the invariance
criterion. Model parameters are not unique descriptions of the input--output
function. Hidden units can often be permuted, and positively homogeneous
nonlinearities admit compensating rescalings of adjacent layers. These
symmetries affect loss landscapes, Bayesian posteriors, and comparisons between
independently trained networks
\citep{dinh2017sharp,ainsworth2023gitrebasin,godfrey2022symmetries}. The
sharpness construction of \citet{dinh2017sharp} is a particularly close
precedent because a function-preserving rescaling changes a diagnostic
quantity without changing generalization. The present setting extends that
logic from a diagnostic of the loss landscape to a representation measurement
whose variation propagates to downstream decisions. Recent compression methods
exploit a stronger form of the same freedom in Transformers. QuaRot inserts
fixed orthogonal transformations into the residual stream and absorbs them
into adjacent weights \citep{ashkboos2024quarot}, while SpinQuant learns such
transformations without changing the full-precision model function
\citep{liu2025spinquant}. Here, the same coordinate freedom is repurposed as
a controlled intervention: because the input and model output are fixed, any
change in the statistic isolates hidden-coordinate dependence.

A second foundation comes from representation comparison. Representational
similarity analysis (RSA) compares pairwise stimulus relations rather than
individual activation coordinates \citep{kriegeskorte2008rsa}. Centered kernel
alignment (CKA) similarly compares example--example Gram matrices and is
invariant to orthogonal transformations and isotropic scaling
\citep{kornblith2019cka}. Subsequent work has emphasized that a measure's
invariance class is part of its scientific meaning rather than a technical
afterthought. \citet{ding2021grounding} evaluate similarity measures through
sensitivity to functionally relevant changes and specificity to irrelevant
ones, while \citet{hayne2024function} show that common similarity measures can
track linearly decodable information more closely than information actually
used by the network. The ReSi benchmark extends such comparisons across
language, vision, and graph models using explicit groundings of similarity
\citep{klabunde2025resi}. This literature motivates the right-orthogonally
invariant comparators used in the present study.

Identifiability and interpretability studies sharpen the same measurement
concern. Learned representations may be determined only up to invertible
linear transformations \citep{roeder2021identifiability}, and models with
nearly identical output distributions need not have nearby representations
under the relevant equivalence class \citep{nielsen2025closeness}. Probe
accuracy may reflect probe capacity rather than information uniquely
attributable to a representation, motivating control tasks and explicit
estimands \citep{hewitt2019probes,pimentel2020probing}. Likewise,
individual-neuron conclusions can depend on the analysis procedure
\citep{antverg2022neurons}, while sparse-autoencoder features need not form a
unique or complete set of canonical units \citep{leask2025sae}. The present
study complements these concerns by showing that even an unsupervised spectral
statistic can vary between exact reparameterizations of one model and that the
variation can propagate to classification decisions.

Finally, statistical work on permutation methods makes clear that a
permutation reference is valid only relative to its invariance or
exchangeability assumptions. Standard permutation tests can lose validity
when the tested null is weaker than equality of the underlying distributions,
although appropriate studentization can restore asymptotic validity
\citep{chung2013permutation}. Exactness with a finite set of randomly sampled
permutations likewise depends on the transformation-group construction and the
definition of the randomized test \citep{hemerik2018exact}. These studies
concern hypothesis-test validity rather than representation reparameterization,
but they establish the broader principle that a permutation distribution is
not automatically neutral. The result developed here addresses a different
requirement: whether a data-internal permutation reference transforms
consistently when hidden coordinates are changed without altering the model
function.

\subsection{Dimensionality Estimation in Language Models}

The possibility that a representation statistic depends on the hidden coordinate system is especially important for dimensionality estimation, because dimensionality has become a common descriptor of language-model representations.
Recent work increasingly uses intrinsic dimension to characterize the geometry
and development of language-model representations. Across several transformer
families, \citet{cheng2025abstraction} identify a high-dimensional middle-layer
phase associated with linguistic abstraction, transferability, and
cross-model predictability. Studying training trajectories,
\citet{lee2025compositionality} find that nonlinear intrinsic dimension tracks
semantic compositionality, whereas linear dimensionality captures different,
more superficial properties of the input. These studies suggest that dimensionality captures meaningful properties of language-model representations, but they also make the choice of estimator important. Intrinsic manifold dimension, covariance rank, effective rank, and factor count quantify different properties and need not agree. Whether each estimand is invariant to a change of hidden basis is a separate question, and the measurement-validity criterion developed here applies to each of them. This study targets
the component count selected by permutation PA rather than nonlinear manifold
dimension. The analysis therefore examines a prerequisite for interpreting
that component count as a model-level quantity, namely invariance across exact
reparameterizations of the same input--output function.

\subsection{Context Sufficiency and Retrieval Evaluation}

This coordinate-dependence problem is examined in retrieval, where distinguishing
useful evidence from inadequate context is practically important.
Contemporary retrieval evaluation separates the quality of retrieved evidence
from the generator's ability to use it. ARES estimates context relevance,
answer faithfulness, and answer relevance with calibrated model-based judges
\citep{saadfalcon2024ares}, while \citet{ru2024ragchecker} provide claim-level
diagnostics for retrieval and generation and validate them against human
judgments. \citet{joren2025sufficient} define sufficient context
as retrieved text that
contains enough information to answer a query and show that the evidence-sufficiency distinction helps diagnose errors and guide abstention.

Guided by this evidence-centered view, evidence-containing
contexts are treated as sufficient and matched evidence-ablated contexts as
insufficient. These operational labels do not independently establish
semantic sufficiency in the broader sense of \citet{joren2025sufficient}.
A general evaluator for end-to-end retrieval systems is not proposed. Instead,
the question is whether an internal-representation score that distinguishes the two
conditions defined by the data construction constitute a well-defined measurement of a fixed
model. Held-out AUROC measures discrimination, while invariance under
function-preserving reparameterization measures dependence on the hidden
coordinate system.

\section{Setup and Theoretical Analysis}
\label{sec:theory}

This section defines the representation matrices, admissible
reparameterizations, and invariance criterion. It then analyzes
column-permutation PA, establishes a limitation for marginal-preserving
reference distributions, and introduces invariant comparator scores.

At layer \(\ell\), let \(X_\ell\in\R^{n_\ell\times d}\) contain the
retrieved-context token representations and let
\(A_\ell\in\R^{m_\ell\times d}\) contain the representations of a fixed target
answer.  A change of hidden coordinates acts on both matrices as
\[
  (X_\ell,A_\ell)\longmapsto(X_\ell R,A_\ell R),
  \qquad R\in O(d).
\]
Here \(O(d)=\{R\in\mathbb{R}^{d\times d}:R^\top R=I_d\}\) is the
orthogonal group. Right-multiplying the hidden representations by \(R\)
changes their coordinate system while preserving inner products and Euclidean
geometry. If the adjacent model parameters are
transformed accordingly, this change of coordinates leaves the model's
input--output function unchanged and therefore defines a function-preserving
reparameterization. For the RMSNorm models considered here, coordinatewise gain parameters can be
absorbed into adjacent linear maps, allowing any \(R\in O(d)\). For the
layer-normalization (LayerNorm) model, the transformation must also preserve
the mean direction. Its admissible class is
\[
  \mathcal G_{\mathrm{LN}}
  =\{R\in O(d):R\mathbf 1=\mathbf 1\},
\]
which is a proper subgroup of the full orthogonal class admitted by the
RMSNorm models. Theorem~\ref{thm:single-view-limit} concerns the full group
and therefore applies directly to the RMSNorm setting, whereas
Theorem~\ref{thm:same-spectrum} below gives a separate construction within
\(\mathcal G_{\mathrm{LN}}\). This coordinate freedom is referred to as
function-preserving reparameterization.

With these architecture-specific classes fixed, a statistic
\(T(X,A)\) is called invariant under the specified function-preserving
reparameterization class when
\[
  T(XR,AR)=T(X,A)
  \qquad\text{for every }R\text{ in the specified architecture-compatible class}.
\]
Given a threshold \(\tau\), the induced decision
\(D_\tau(X,A)=\mathbf 1\{T(X,A)\geq\tau\}\) is invariant if it is unchanged
under the same transformation class.

Function-preserving reparameterization invariance is necessary when \(T\) is interpreted as a
property of the input--output function or the retrieved evidence, rather than as a
property of one chosen hidden-coordinate system.

\subsection{Coordinate Dependence of Column-Permutation Parallel Analysis}
\label{sec:permpa-theory}

Applying the function-preserving reparameterization invariance requirement to
column-permutation PA reveals a mismatch because the observed spectrum is right-orthogonally invariant, but the permutation
reference distribution need not be.

PA is considered first because its mechanism can be analyzed explicitly within both
normalization architectures. After establishing the PA-specific result,
Section~\ref{sec:single-view-limit} shows that, under
the full orthogonal group, the same issue reflects a broader incompatibility
between exact marginal preservation and equivariance.

To see the source of the mismatch, let \(X\in\R^{n\times d}\), \(n\geq2\),
have centered columns and define
\[
  S=\frac{1}{n}X^\top X.
\]
Here, \(\lambda_k(A)\) denotes the \(k\)-th largest eigenvalue of a symmetric matrix \(A\). For each feature \(j\), independently draw a uniform \(n\times n\) permutation
matrix \(P_j\) and set \(X^\pi_{\cdot j}=P_jX_{\cdot j}\).
The nonparametric column-permutation form of parallel analysis is considered
\citep{horn1965parallel,buja1992parallel,dobriban2020permutation}. It compares
the ordered eigenvalues of \(S\) with componentwise quantiles of
\[
  S^\pi=\frac{1}{n}(X^\pi)^\top X^\pi.
\]
For ideal permutation-reference quantiles \(t_k(X)\), the estimator retains consecutive
components from the largest eigenvalue onward and stops at the first failed
comparison:
\[
  \widehat r_{\mathrm{PA}}(X)
  =
\max\bigl\{r\in\{0,\ldots,\min(n,d)\}:
\lambda_k(S)>t_k(X)
\text{ for all }1\leq k\leq r\bigr\}.
\]
Here \(\widehat r_{\mathrm{PA}}(X)\) denotes the number of leading components selected by PA, and \(r\) is a candidate component count in the maximization.

Both lemmas below follow from elementary properties of sampling without
replacement and are stated because the later results depend on their exact
finite-sample form.

\begin{independentlemma}[Expected Permutation Covariance]
\label{lem:orientation}
Conditionally on \(X\),
\[
  \E_\pi[S^\pi]=\diag(S).
\]
For every \(R\in O(d)\), \(X\) and \(XR\) have the same singular values, but
\[
  \E_\pi[S^\pi_R]=\diag(R^\top S R),
\]
where \(S^\pi_R\) is generated by independently permuting the columns of
\(XR\). Thus the expected permutation-reference covariance depends on the
hidden-coordinate system even though the observed covariance eigenvalues do
not.
\end{independentlemma}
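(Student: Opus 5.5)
The plan is to compute \(\E_\pi[S^\pi]\) entrywise, treating the diagonal and off-diagonal entries separately. Write \(S^\pi_{jk}=\frac1n (P_jX_{\cdot j})^\top(P_kX_{\cdot k})\).

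For the diagonal \(j=k\), the identity is deterministic. Since \(P_j^\top P_j=I_n\), we get \(S^\pi_{jj}=\frac1n\|X_{\cdot j}\|^2=S_{jj}\) for every draw. Hence \(\E_\pi[S^\pi_{jj}]=S_{jj}\).

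For \(j\neq k\), the permutations \(P_j\) and \(P_k\) are independent and uniform. Then \(\E[P_j]=\frac1n\1\1^\top\), because each entry of a uniform permutation matrix equals one with probability \(1/n\). Independence gives
\[
\E_\pi[S^\pi_{jk}]=\frac1n X_{\cdot j}^\top\,\E[P_j]^\top\E[P_k]\,X_{\cdot k}=\frac1n X_{\cdot j}^\top\Bigl(\tfrac1n\1\1^\top\Bigr)\Bigl(\tfrac1n\1\1^\top\Bigr)X_{\cdot k}.
\]
Since \(\1^\top\1=n\), the middle product collapses to \(\tfrac1n\1\1^\top\). This leaves \(\frac1{n^2}(\1^\top X_{\cdot j})(\1^\top X_{\cdot k})\), which vanishes because the columns are centered. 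This is the only place the centering hypothesis enters, and it is the one step worth stating carefully. Without centering, the off-diagonal expectation would be the product of column means, not zero. Combining the two cases gives \(\E_\pi[S^\pi]=\diag(S)\).

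For the rotated statement, I first check that \(XR\) still has centered columns: \(\1^\top XR=(\1^\top X)R=0\). Its sample covariance is \(\frac1n R^\top X^\top XR=R^\top SR\). Applying the first part to \(XR\) in place of \(X\) then yields \(\E_\pi[S^\pi_R]=\diag(R^\top SR)\).

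Equality of singular values follows from \((XR)^\top(XR)=R^\top(X^\top X)R\). This matrix is orthogonally similar to \(X^\top X\), so it has the same eigenvalues, and the singular values are their square roots.

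Finally, I will establish the closing interpretive sentence by exhibiting coordinate dependence. If \(S\) is not a multiple of the identity, it has eigenvalues \(\mu_1\neq\mu_2\). Choosing \(R\) whose first column is an eigenvector for \(\mu_1\) and whose second column is an eigenvector for \(\mu_2\) gives two different diagonal entries, \(\mu_1\) and \(\mu_2\). Permuting the columns of that \(R\) swaps these entries, so \(\diag(R^\top SR)\) genuinely varies with \(R\) while \(\lambda_k(S)\) does not.

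I expect no real obstacle here. The only care needed is the independence across columns, which allows the off-diagonal expectation to factor, together with the centering step above.
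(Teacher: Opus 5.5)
Your proposal is correct and follows the paper's proof. The diagonal of \(S^\pi\) is preserved deterministically, and the off-diagonal expectations vanish by independence of \(P_j,P_k\) together with column centering; your explicit \(\E[P_j]=\frac1n\1\1^\top\) computation simply spells this out. The rotated identity then follows by applying the first part to \(XR\), which is still centered and has covariance \(R^\top SR\).

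One small remark concerns your witness for the closing sentence. Swapping two columns of \(R\) only permutes the diagonal entries, so it leaves the spectrum of the expected reference, which is what PA compares against, unchanged. A more telling witness compares two bases. In an eigenvector-aligned basis, \(\diag(R^\top SR)\) equals the spectrum of \(S\). In a constant-diagonal Schur--Horn basis, every diagonal entry equals \(\bar\lambda\). The largest eigenvalue of the expected reference then moves from \(\lambda_1(S)\) to \(\bar\lambda\) whenever \(S\) is not a multiple of the identity.
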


\begin{proof}
Write \(x_j=X_{\cdot j}\).  The diagonal entries satisfy
\[
  \frac{1}{n}(P_jx_j)^\top(P_jx_j)
  =
  \frac{1}{n}\lVert x_j\rVert_2^2=S_{jj}.
\]
For \(j\neq k\), independence of \(P_j\) and \(P_k\), together with column
centering, gives
\[
  \E_\pi[(P_jx_j)^\top(P_kx_k)]=0.
\]
This proves the first identity.  Applying it to \(XR\) proves the second,
whereas \((XR)^\top(XR)/n=R^\top S R\) is orthogonally similar to \(S\).
\end{proof}

Lemma~\ref{lem:orientation} identifies the source of coordinate dependence at
the level of the reference mean. To connect that mean to the componentwise PA
thresholds, the next lemma controls the fluctuation of a realized permutation
covariance around its coordinate-dependent center.

\begin{independentlemma}[Permutation Covariance Bound]
\label{lem:null-concentration}
Define
\[
  V(S)
  =
  (\tr S)^2-\sum_{j=1}^d S_{jj}^2.
\]
Then, conditionally on \(X\),
\[
  \E_\pi\!\left[
    \left\lVert S^\pi-\diag(S)\right\rVert_\F^2
  \right]
  =
  \frac{V(S)}{n-1}.
\]
Consequently, for every \(\delta\in(0,1)\), with conditional probability at
least \(1-\delta\),
\[
  \left\lVert S^\pi-\diag(S)\right\rVert_\op
  \leq
  \varepsilon_\delta(S)
  :=
  \sqrt{\frac{V(S)}{(n-1)\delta}}.
\]
If \(s_{(1)}\geq\cdots\geq s_{(d)}\) are the ordered diagonal entries of
\(S\), then simultaneously for all \(k\),
\[
  \left|\lambda_k(S^\pi)-s_{(k)}\right|
  \leq\varepsilon_\delta(S).
\]
For a real-valued random variable \(Y\), define the upper
\((1-\alpha)\)-quantile by
\[
  Q_{1-\alpha}(Y)=\inf\{y\in\R:\Prob(Y\leq y)\geq1-\alpha\}.
\]
Here \(y\) is a candidate real-valued threshold.
Then the ideal componentwise threshold
\(t_k(X)=Q_{1-\alpha}\{\lambda_k(S^\pi)\}\) satisfies
\[
  t_k(X)\leq s_{(k)}+\varepsilon_\alpha(S).
\]
\end{independentlemma}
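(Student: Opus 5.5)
The plan is to compute the second moment exactly, entry by entry, and then pass to the operator norm and to eigenvalues by standard inequalities.

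\textbf{Diagonal and off-diagonal entries.} The diagonal of \(S^\pi\) equals \(\diag(S)\) deterministically, as in the proof of Lemma~\ref{lem:orientation}. Hence
\[
\lVert S^\pi-\diag(S)\rVert_\F^2=\sum_{j\neq k}\bigl((S^\pi)_{jk}\bigr)^2 .
\]
For \(j\neq k\), set \(x_j=X_{\cdot j}\) and write
\[
(S^\pi)_{jk}=\tfrac1n(P_jx_j)^\top(P_kx_k).
\]
Since \(P_j,P_k\) are independent and uniform, \(P_j^\top P_k\) is itself a uniform random permutation \(\sigma\). So \((S^\pi)_{jk}\) has the law of \(\frac1n\sum_i x_{j,i}\,x_{k,\sigma(i)}\).

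\textbf{Variance under a random permutation.} The key computation uses the centered columns. For fixed centered vectors \(a,b\) and uniform \(\sigma\),
\[
\E\Bigl(\sum_i a_i b_{\sigma(i)}\Bigr)^2=\frac{\lVert a\rVert^2\lVert b\rVert^2}{n-1}.
\]
To verify it, expand the square into diagonal terms \(i=i'\) and off-diagonal terms \(i\neq i'\). Use \(\E[b_{\sigma(i)}^2]=\lVert b\rVert^2/n\). For \(i\ne i'\), use \(\E[b_{\sigma(i)}b_{\sigma(i')}]=\frac{(\sum b)^2-\lVert b\rVert^2}{n(n-1)}=-\frac{\lVert b\rVert^2}{n(n-1)}\). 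Likewise \(\sum_{i\ne i'}a_ia_{i'}=-\lVert a\rVert^2\). Combining the two pieces gives \(\lVert a\rVert^2\lVert b\rVert^2\bigl(\frac1n+\frac1{n(n-1)}\bigr)\), which equals the claimed value.

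Dividing by \(n^2\) gives \(\E[(S^\pi)_{jk}^2]=S_{jj}S_{kk}/(n-1)\). Summing over \(j\neq k\) gives \(\bigl((\tr S)^2-\sum_j S_{jj}^2\bigr)/(n-1)=V(S)/(n-1)\), which is the claimed identity. This bookkeeping with sampling without replacement is the only step needing care. The main trap is the \(n-1\) denominator, which comes from the centering.

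\textbf{High-probability operator bound.} Apply Markov's inequality to the nonnegative variable \(\lVert S^\pi-\diag(S)\rVert_\F^2\) at level \(V(S)/((n-1)\delta)\). With probability at least \(1-\delta\), this gives \(\lVert S^\pi-\diag(S)\rVert_\F\le\varepsilon_\delta(S)\). The operator norm is at most the Frobenius norm, so the operator bound follows.

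\textbf{Eigenvalue bound.} The eigenvalues of \(\diag(S)\) are the ordered diagonal entries \(s_{(k)}\). Weyl's inequality then gives \(|\lambda_k(S^\pi)-s_{(k)}|\le\lVert S^\pi-\diag(S)\rVert_\op\) for all \(k\) simultaneously, on the same event.

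\textbf{Quantile bound.} Take \(\delta=\alpha\). Then \(\Prob\bigl(\lambda_k(S^\pi)\le s_{(k)}+\varepsilon_\alpha(S)\bigr)\ge1-\alpha\), so \(y=s_{(k)}+\varepsilon_\alpha(S)\) lies in the set defining \(Q_{1-\alpha}\). Taking the infimum gives \(t_k(X)\le s_{(k)}+\varepsilon_\alpha(S)\).
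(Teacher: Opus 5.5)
Your proposal is correct and follows the same route as the paper. That route is: the exact off-diagonal second moments $S_{jj}S_{kk}/(n-1)$ from sampling without replacement and centering, Markov's inequality on the squared Frobenius norm, $\lVert\cdot\rVert_\op\leq\lVert\cdot\rVert_\F$, Weyl's inequality, and $\delta=\alpha$ for the quantile. You also write out the permutation second-moment calculation that the paper only asserts, and it checks out; the degenerate case $V(S)=0$, where Markov at level zero does not literally apply, is trivial because the nonnegative Frobenius deviation then vanishes almost surely.
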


The proof is provided in Appendix~\ref{app:null-concentration}.
The \(1/\sqrt{\delta}\) factor comes from Markov's inequality applied to the
squared Frobenius norm; at \(\delta=0.05\) it is
\(\sqrt{20}\approx4.47\) times the root-mean-square Frobenius fluctuation.
The additional substitution
\(\lVert\cdot\rVert_\op\leq\lVert\cdot\rVert_\F\) can only increase
conservatism. Accordingly, the bound is a sufficient finite-sample guarantee,
not an estimate of the actual 95th-percentile deviation.

Having linked the permutation thresholds to the covariance diagonal, the analysis uses
the Schur--Horn theorem to determine how much that diagonal can change under a
rotation.
If \(S\succeq0\),
\(\tr(S)>0\), and \(\bar\lambda=\tr(S)/d\), then
\[
  \min_{R\in O(d)}
  \lambda_1\!\left(\E_\pi[S_R^\pi]\right)
  =\bar\lambda,
  \qquad
  \max_{R\in O(d)}
  \lambda_1\!\left(\E_\pi[S_R^\pi]\right)
  =\lambda_1(S).
\]
The lower endpoint is attained by a basis with constant covariance diagonal;
the upper endpoint is attained by aligning one coordinate with a leading
eigenvector. Hence the largest eigenvalue of the expected reference
covariance, not the mean or a quantile of the largest realized
permutation-reference
eigenvalue, can vary over an exact
multiplicative range of
\[
  \frac{d\,\lambda_1(S)}{\tr(S)}.
\]
Importantly, this expectation range alone does not establish that PA thresholds
or selected component counts must vary. To obtain such a statement, the argument combines
the Schur--Horn construction with the finite-sample quantile bound from
Lemma~\ref{lem:null-concentration}. The resulting spectrum-dependent
quantities are
\[
  c_\alpha(S,n)
  =
  \bar\lambda+
  \sqrt{
    \frac{(\tr S)^2-(\tr S)^2/d}
         {(n-1)\alpha}
  }
\]
and define
\[
  m_\alpha(S,n)
  =
  \max\left(
  \{0\}\cup\left\{
    m\in\{1,\ldots,\min(n,d)\}:
    \lambda_m(S)>c_\alpha(S,n)
  \right\}\right).
\]

Here \(m_\alpha(S,n)\) counts the leading observed eigenvalues that remain
strictly above a conservative reference threshold in a constant-diagonal
orientation. This gives the following orbit-wide guarantee.

\begin{theorem}[Component-Count Range]
\label{thm:spectral-range}
Consider ideal PA based on independent column permutations. Use upper
\(1-\alpha\) componentwise quantiles, selection from the largest component
until the first failed comparison, and strict exceedance. Then
\[
  \max_{R\in O(d)}\widehat r_{\mathrm{PA}}(XR)
  -
  \min_{R\in O(d)}\widehat r_{\mathrm{PA}}(XR)
  \geq
  m_\alpha(S,n).
\]
The left-hand side is the range of PA-selected component counts over all
orthogonal coordinate systems \(R\in O(d)\). In particular, an orientation aligned with a leading eigenvector has selected
component count zero, while a constant-diagonal orientation retains at least
\(m_\alpha(S,n)\) components.
\end{theorem}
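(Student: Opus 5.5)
The plan is to exhibit two explicit orientations: one whose PA count is forced to be zero, and one whose count is at least \(m_\alpha(S,n)\). The range over \(O(d)\) is then at least \(m_\alpha(S,n)-0\). Two facts hold throughout. First, right multiplication by \(R\) preserves column centering, so every \(XR\) is admissible input to Lemma~\ref{lem:null-concentration}. Second, \(S_R:=R^\top S R\) has the same eigenvalues and trace as \(S\). Hence the observed side of every PA comparison, \(\lambda_k(S)\), is the same in every orientation, and only the thresholds \(t_k(XR)\) change. If \(m_\alpha(S,n)=0\) there is nothing to prove, so assume \(m:=m_\alpha(S,n)\geq1\).

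\emph{Aligned orientation.} Choose \(R\) whose first column is a unit leading eigenvector \(v_1\) of \(S\). Then \((S_R)_{11}=v_1^\top S v_1=\lambda_1(S)\). Permuting a column does not change its norm, so for every realization of the permutations \((S_R^\pi)_{11}=\lambda_1(S)\). The Rayleigh quotient at \(e_1\) then gives \(\lambda_1(S_R^\pi)\geq\lambda_1(S)\) almost surely. It follows that \(\Prob(\lambda_1(S_R^\pi)\leq y)=0\) for \(y<\lambda_1(S)\), so \(t_1(XR)\geq\lambda_1(S)\). The strict comparison \(\lambda_1(S)>t_1(XR)\) therefore fails at \(k=1\), and \(\widehat r_{\mathrm{PA}}(XR)=0\).

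\emph{Constant-diagonal orientation.} The constant vector \(\bar\lambda\mathbf 1\), with \(\bar\lambda=\tr(S)/d\), is majorized by the eigenvalue vector of \(S\). By the Schur--Horn theorem there is therefore an \(R\in O(d)\) with \(\diag(S_R)=\bar\lambda I_d\). For this \(R\):
\begin{itemize}
\item all ordered diagonal entries equal \(s_{(k)}=\bar\lambda\);
\item \(V(S_R)=(\tr S)^2-d\bar\lambda^2=(\tr S)^2-(\tr S)^2/d\).
\end{itemize}
Applying the quantile bound of Lemma~\ref{lem:null-concentration} with \(\delta=\alpha\) to \(XR\) gives \(t_k(XR)\leq\bar\lambda+\varepsilon_\alpha(S_R)=c_\alpha(S,n)\) for every \(k\). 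For \(k\leq m\), monotonicity of the eigenvalues gives \(\lambda_k(S)\geq\lambda_m(S)>c_\alpha(S,n)\geq t_k(XR)\). So every comparison for \(k=1,\ldots,m\) succeeds, and \(\widehat r_{\mathrm{PA}}(XR)\geq m\), since \(m\leq\min(n,d)\) by definition. Combining the two orientations gives the stated range bound.

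The main obstacle I expect is the bookkeeping in the second step rather than any new estimate. The conclusion of Lemma~\ref{lem:null-concentration} has to be instantiated for the rotated matrix \(XR\), with its own diagonal and its own \(V(S_R)\), and not for \(S\). One must check that \(V(S_R)\) collapses exactly to the expression under the square root in \(c_\alpha\). One must also check that the lemma's quantile statement uses \(\delta=\alpha\) consistently with the upper \(1-\alpha\) quantile definition.

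The aligned step additionally needs care with the quantile definition \(Q_{1-\alpha}\). An almost-sure lower bound on \(\lambda_1(S_R^\pi)\) transfers to the quantile precisely because \(Q_{1-\alpha}\) is defined as an infimum over \(y\) with \(\Prob(Y\leq y)\geq1-\alpha>0\). Together with strict exceedance, this is what forces the count to be exactly zero even when ties occur.
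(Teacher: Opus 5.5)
Your proposal is correct and follows the paper's proof essentially step for step. The leading-eigenvector orientation forces \(t_1(XR)\geq\lambda_1(S)\) because column permutation fixes every diagonal entry, and the Schur--Horn constant-diagonal orientation with Lemma~\ref{lem:null-concentration} at \(\delta=\alpha\) gives \(t_k(XR)\leq c_\alpha(S,n)\) for every \(k\). Your extra bookkeeping (centering preserved under right multiplication, the Rayleigh-quotient step, and the transfer of the almost-sure bound to \(Q_{1-\alpha}\)) only makes explicit what the paper leaves implicit.
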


Appendix~\ref{app:spectral-range} gives the proof. Because
Lemma~\ref{lem:null-concentration} is conservative, \(m_\alpha(S,n)\) can be
zero even when the actual PA count varies across orientations. For the main
setting \(d=896\), \(n=192\), and \(\alpha=0.05\), the bound gives
\(c_\alpha/\bar\lambda\approx291\). Thus a positive certificate requires
approximately \(\lambda_1(S)\gtrsim0.32\,\tr(S)\), which explains why the
sufficient bound certifies only a subset of contexts. Whenever
\(m_\alpha(S,n)>0\), the theorem nevertheless guarantees an orbit-wide range
directly from the observed spectrum. The range result uses the full group \(O(d)\). An explicit construction that
also lies within the LayerNorm-compatible subgroup is given next.

\begin{theorem}[Different PA Component Counts]
\label{thm:same-spectrum}
Fix integers \(n\geq2\) and \(d\geq3\), \(\lambda>0\), and
\(\alpha\in(0,1)\). Let
\(u\in\R^n\) be any centered unit vector, \(e_1\) the first coordinate vector,
and
\[
  w=(2/3,\,2/3,\,-1/3,\,0,\ldots,0)^\top.
\]
Define
\[
  X_{\mathrm{loc}}=\sqrt{n\lambda}\,u e_1^\top,
  \qquad
  X_{\mathrm{mix}}=\sqrt{n\lambda}\,u w^\top.
\]
There exists \(R\in\mathcal G_{\mathrm{LN}}\) such that
\(X_{\mathrm{mix}}=X_{\mathrm{loc}}R\); hence the matrices are related by a
LayerNorm-compatible transformation and have the same singular values.
Nevertheless, ideal PA with consecutive-from-largest selection and strict
exceedance satisfies
\[
  \widehat r_{\mathrm{PA}}(X_{\mathrm{loc}})=0.
\]
If
\[
  n-1>\frac{48}{25\alpha},
\]
then
\[
  \widehat r_{\mathrm{PA}}(X_{\mathrm{mix}})=1.
\]
\end{theorem}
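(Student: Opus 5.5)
The plan is to establish three facts in order: that the transformation \(R\) exists, that \(\widehat r_{\mathrm{PA}}(X_{\mathrm{loc}})=0\), and that \(\widehat r_{\mathrm{PA}}(X_{\mathrm{mix}})=1\). Each reduces to a short computation once the covariance matrices are written down. Throughout, \(S_{\mathrm{loc}}=\lambda e_1e_1^\top\) and \(S_{\mathrm{mix}}=\lambda ww^\top\), since \(\lVert u\rVert_2=1\). Both matrices have columns centered because \(u\) is centered, so Lemma~\ref{lem:null-concentration} applies.

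\emph{Existence of \(R\).} We need \(R\in O(d)\) with \(e_1^\top R=w^\top\) and \(R\mathbf 1=\mathbf 1\). For orthogonal \(R\), these are equivalent to \(R^\top e_1=w\) and \(R^\top\mathbf 1=\mathbf 1\). The pair \((e_1,\mathbf 1)\) and the pair \((w,\mathbf 1)\) have the same Gram matrix. Indeed, \(\lVert w\rVert_2^2=4/9+4/9+1/9=1\), \(\langle w,\mathbf 1\rangle=2/3+2/3-1/3=1=\langle e_1,\mathbf 1\rangle\), and the norms of \(\mathbf 1\) trivially agree. Hence there is a linear isometry \(\operatorname{span}(e_1,\mathbf 1)\to\operatorname{span}(w,\mathbf 1)\) sending \(e_1\mapsto w\) and \(\mathbf 1\mapsto\mathbf 1\). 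The two orthogonal complements have equal dimension, so this isometry extends to an orthogonal \(Q\) on \(\R^d\). We then set \(R=Q^\top\), so that \(X_{\mathrm{loc}}R=\sqrt{n\lambda}\,u(Q e_1)^\top=X_{\mathrm{mix}}\). The hypothesis \(d\ge3\) is used only so that \(w\) is defined. Equal singular values then follow from Lemma~\ref{lem:orientation}.

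\emph{The localized case.} Only column 1 of \(X_{\mathrm{loc}}\) is nonzero. Permuting it gives \(\sqrt{n\lambda}P_1u\), whose squared norm is \(n\lambda\). Therefore \(S^\pi=\lambda e_1e_1^\top\) deterministically. This gives \(\lambda_1(S^\pi)=\lambda\) almost surely, so \(t_1=\lambda\). Strict exceedance \(\lambda_1(S)=\lambda>\lambda\) fails, and consecutive selection gives \(\widehat r_{\mathrm{PA}}=0\).

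\emph{The mixed case.} Here \(\diag(S_{\mathrm{mix}})=\lambda(4/9,4/9,1/9,0,\dots,0)\), so \(s_{(1)}=4\lambda/9\) and \(\tr S=\lambda\). This gives
\[
V(S)=\lambda^2\left(1-\tfrac{16+16+1}{81}\right)=\tfrac{48}{81}\lambda^2 .
\]
Applying Lemma~\ref{lem:null-concentration} with \(\delta=\alpha\) yields
\[
t_1\le \tfrac{4}{9}\lambda+\tfrac{\lambda}{9}\sqrt{\tfrac{48}{(n-1)\alpha}} .
\]
This is strictly below \(\lambda\) exactly when \(48/((n-1)\alpha)<25\), which is the stated hypothesis. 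Hence \(\lambda_1(S)=\lambda>t_1\) and the first component is retained. For the second comparison, \(\lambda_2(S)=0\), while \(S^\pi\succeq0\) forces \(\lambda_2(S^\pi)\ge0\) and hence \(t_2\ge0\). So \(0>t_2\) fails and the count is exactly \(1\).

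I expect the only step needing care to be the extension argument for \(R\) inside \(\mathcal G_{\mathrm{LN}}\): one must verify the matching Gram data so that the isometry exists. The remaining steps are direct substitutions into Lemmas~\ref{lem:orientation} and~\ref{lem:null-concentration}.
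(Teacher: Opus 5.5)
Your proposal is correct and follows the paper's proof essentially step for step. For \(X_{\mathrm{loc}}\) you use the deterministic reference spectrum, and for \(X_{\mathrm{mix}}\) you use the bound \(t_1\le 4\lambda/9+\sqrt{V(S)/((n-1)\alpha)}\) with \(V(S)=48\lambda^2/81=16\lambda^2/27\), obtained from Lemma~\ref{lem:null-concentration} at \(\delta=\alpha\), followed by \(t_2\ge0\). The only difference is that you prove \(R\in\mathcal G_{\mathrm{LN}}\) exists by extending the Gram-preserving isometry \((e_1,\mathbf 1)\mapsto(w,\mathbf 1)\), which spells out the paper's transitivity claim for the stabilizer of \(\mathbf 1\); you also state explicitly the column centering needed to invoke the lemma.
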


The proof is provided in Appendix~\ref{app:same-spectrum}. Together, the two
theorems provide complementary conclusions.
Theorem~\ref{thm:spectral-range} gives a spectrum-dependent range guarantee
over the full orthogonal orbit, whereas Theorem~\ref{thm:same-spectrum} gives
a concrete count change within the LayerNorm-compatible subgroup. In both
cases, the observed spectrum remains fixed while the permutation reference
changes with the basis.

The theorems above analyze ideal PA with consecutive-from-largest selection and
strict exceedance, a rule that may return zero. They establish coordinate
dependence of the PA estimator rather than ambiguity in algebraic or covariance
rank. Increasing the permutation budget estimates each coordinate-specific
reference distribution more accurately but does not make those distributions
equal.

The experimental scorer additionally standardizes each coordinate, enforces a
minimum component count of one, and converts the selected subspace into a
context--answer projection score. Thus reparameterization can propagate through
standardization, component selection, projection, and fixed-threshold
classification. Appendix~\ref{app:score-definitions} gives the complete score,
including numerical safeguards and aggregation rules. The empirical analysis
tests this full pipeline rather than only the centered-covariance setting of
Lemma~\ref{lem:orientation}.

\subsection{Limits of Marginal-Preserving Reference Distributions}
\label{sec:single-view-limit}

The PA calculation raises a broader question. Can any reference procedure
preserve the empirical distribution of every coordinate, remove
cross-coordinate covariance, and still behave consistently under orthogonal
changes of basis? This feasibility question is answered for procedures constructed solely
from the observed representation matrix.

For a row matrix \(X\), define its empirical row measure and projected
empirical measures by
\[
  \widehat\mu_X=\frac{1}{n}\sum_{i=1}^n\delta_{x_i},
  \qquad
  \widehat\mu_{Xv}
  =
  \frac{1}{n}\sum_{i=1}^n\delta_{x_i^\top v}.
\]
Let \(\mathcal K(X,\cdot)\) denote any randomized data-internal reference
procedure that uses only \(X\) to produce an \(n\times d\) reference matrix,
and write \(Z\sim\mathcal K(X,\cdot)\). This restriction distinguishes the
procedure from methods that use paired data or an external control sample.

The procedure preserves coordinate marginals exactly if, for every
fixed \(X\),
\[
  \widehat\mu_{Ze_j}=\widehat\mu_{Xe_j}
  \quad\text{almost surely for }j=1,\ldots,d.
\]
It is right-orthogonally equivariant if, for
every fixed \(X\) and \(R\in O(d)\),
\[
  \mathcal K(XR,\cdot)
  =
  \mathcal L(ZR\mid Z\sim\mathcal K(X,\cdot)).
\]

At first glance, exact marginal preservation and orthogonal equivariance may
appear to constrain only one-dimensional projections. The next theorem shows
that together they determine the entire empirical row distribution.

\begin{theorem}[Marginal-Preservation Limit]
\label{thm:single-view-limit}
Suppose a randomized data-internal reference procedure preserves every
coordinate marginal. Suppose also that, for every input matrix and every
\(R\in O(d)\), its output distribution transforms from \(Z\) to \(ZR\).
Then
\[
  \widehat\mu_Z=\widehat\mu_X
  \quad\text{almost surely}.
\]
Equivalently, \(Z=PX\) for a random row-permutation matrix \(P=P(X,Z)\).
Therefore,
\[
  Z^\top Z=X^\top X
\]
almost surely.
\end{theorem}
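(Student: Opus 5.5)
The plan is to show that every one-dimensional projection of \(Z\) has the same empirical law as the corresponding projection of \(X\), almost surely. The Cramér--Wold device for finitely supported measures then identifies the full empirical row measures.

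\textbf{Step 1 (a single direction).} Fix a unit vector \(v\in\R^d\) and choose \(R\in O(d)\) with first column \(Re_1=v\). Apply marginal preservation to the input \(XR\). If \(Z'\sim\mathcal K(XR,\cdot)\), then
\[
\widehat\mu_{Z'e_1}=\widehat\mu_{XRe_1}=\widehat\mu_{Xv}
\]
almost surely. By equivariance, \(Z'\) has the law of \(ZR\) with \(Z\sim\mathcal K(X,\cdot)\). Because the event \(\{\widehat\mu_{Z'e_1}=\widehat\mu_{Xv}\}\) is measurable and depends only on the law of \(Z'\), we obtain \(\widehat\mu_{ZRe_1}=\widehat\mu_{Zv}=\widehat\mu_{Xv}\) almost surely. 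The exceptional null set depends on \(v\).

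\textbf{Step 2 (all directions simultaneously).} This is the main obstacle, because there are uncountably many directions \(v\). I would first apply Step 1 to a countable dense set \(D\) of unit vectors and intersect the null sets. On the resulting full-measure event, \(\widehat\mu_{Zv}=\widehat\mu_{Xv}\) for every \(v\in D\). The map \(v\mapsto\widehat\mu_{Zv}\) is continuous in the weak topology, since the atoms \(z_i^\top v\) move continuously; the same holds for \(X\). Equality therefore extends to all \(v\).

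As an alternative that avoids topology, note that \(\widehat\mu_Z\) and \(\widehat\mu_X\) each have at most \(n\) atoms. Hence only finitely many hyperplanes \(\{v:(a-b)^\top v=0\}\) can cause distinct atoms to collide under projection. The intended argument is to pick a single direction \(v\) outside all such hyperplanes. The difficulty is that these hyperplanes are random through \(Z\). I would therefore rely on the dense-set argument, or equivalently use characteristic functions: \(\int e^{it v^\top x}\,d\widehat\mu\) agree for all \(v\in D\) and \(t\in\R\), and continuity in \(v\) extends the identity everywhere.

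\textbf{Step 3 (conclusion).} Equality of all one-dimensional projections gives equal characteristic functions \(\varphi(\xi)=\frac1n\sum_i e^{i\xi^\top x_i}\) for all \(\xi\in\R^d\). Uniqueness of Fourier transforms of finite measures then yields \(\widehat\mu_Z=\widehat\mu_X\) almost surely. Since both are uniform over \(n\) rows counted with multiplicity, the multisets of rows coincide, so \(Z=PX\) for a row permutation \(P\), which may be chosen measurably (for example, lexicographically). Finally, \(Z^\top Z=X^\top P^\top PX=X^\top X\). Combined with Lemma~\ref{lem:orientation}, this shows that such a procedure cannot remove cross-coordinate covariance.
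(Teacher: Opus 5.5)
Your proposal is correct and follows the paper's proof essentially step for step. You take a rotation $R$ with $Re_1=v$, apply marginal preservation at $XR$ and then equivariance to get $\widehat\mu_{Zv}=\widehat\mu_{Xv}$ almost surely for each fixed $v$, pass to all directions via a countable dense set and continuity in $v$, and then use Cram\'er--Wold and equal atom weights to obtain $Z=PX$ and $Z^\top Z=X^\top X$; your closing appeal to Lemma~\ref{lem:orientation} is unnecessary, since the inability to remove cross-coordinate covariance follows directly from $Z^\top Z=X^\top P^\top PX=X^\top X$.
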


The proof is provided in Appendix~\ref{app:single-view-limit}. Its implication is
stronger than covariance preservation alone. An equivariant procedure that
preserves every coordinate marginal must reproduce the complete empirical row
measure, up to row permutation. It therefore cannot serve as a
dependence-destroying reference distribution.

\begin{independentcorollary}[Data-Internal Reference Limitation]
\label{cor:no-null}
A reference procedure constructed solely from \(X\) cannot satisfy all three
requirements simultaneously: exact preservation of every coordinate's
empirical marginal distribution, right-orthogonal equivariance, and alteration
of the covariance matrix \(X^\top X\). In particular, cross-coordinate
covariances cannot be removed while the first two requirements are retained.
\end{independentcorollary}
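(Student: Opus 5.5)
The corollary should follow directly from Theorem~\ref{thm:single-view-limit} by contraposition. Suppose a data-internal procedure \(\mathcal K\) preserves every coordinate marginal exactly and is right-orthogonally equivariant. Theorem~\ref{thm:single-view-limit} then gives \(\widehat\mu_Z=\widehat\mu_X\) almost surely, so \(Z=PX\) for some random row permutation \(P\). It follows that
\[
  Z^\top Z = X^\top P^\top P X = X^\top X
\]
almost surely. Hence the covariance matrix is never altered, and the third requirement fails. Read as a statement about the procedure's output distribution, any two of the requirements exclude the third. In particular, the two structural properties exclude alteration of the covariance.

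For the clause on removing cross-coordinate covariance, I will note that the Gram matrix is reproduced exactly, including every off-diagonal entry \((X^\top X)_{jk}\) with \(j\neq k\). So whenever \(X\) has some nonzero cross-coordinate covariance, no realization of \(Z\) can set it to zero. If covariance is defined after column centering, the same conclusion holds, because row permutation preserves column means. Centering therefore commutes with \(Z=PX\), and the centered Gram matrices agree as well.

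The only point needing care is the quantifier structure. Theorem~\ref{thm:single-view-limit} requires equivariance for every input matrix and every \(R\in O(d)\), not just for the observed \(X\). The corollary should therefore be read with the same uniform quantifiers. I will also state that the almost-sure qualifier carries over to the conclusion. Beyond this bookkeeping there is no real obstacle, since the substantive work, reconstructing the full row measure from all one-dimensional projections via equivariance, is already done in the theorem.

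As a remark, column-permutation PA preserves marginals and removes covariance in expectation, as Lemma~\ref{lem:orientation} shows. It must therefore fail equivariance, and Lemma~\ref{lem:orientation} exhibits that failure explicitly.
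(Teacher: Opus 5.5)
Your proposal is correct and takes the same route as the paper: the corollary follows immediately from Theorem~\ref{thm:single-view-limit}, whose conclusion \(Z=PX\), and hence \(Z^\top Z=X^\top X\) almost surely, rules out any alteration of the covariance once exact marginal preservation and equivariance are assumed. Your extra remarks are accurate bookkeeping that the paper leaves implicit: row permutations fix column means, so centered Gram matrices also agree, and the hypotheses must hold uniformly over inputs and \(R\in O(d)\).
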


To clarify the theorem's practical implications,
Table~\ref{tab:reference-assumptions} makes the available trade-offs explicit.
Column permutation obtains a changed covariance reference by giving
up orthogonal equivariance. A trace-matched isotropic Gaussian reference can
be equivariant and change covariance, but it gives up exact empirical marginal
preservation. A common row permutation satisfies both invariance requirements
only because it leaves the covariance unchanged.

\begin{table}[htbp]
\caption{Assumptions and consequences for representative data-internal
reference procedures. ``Different covariance'' asks whether the procedure can
produce \(Z^\top Z\neq X^\top X\).}
\label{tab:reference-assumptions}
\centering
\small
\begin{tabular}{lccc}
\toprule
Reference procedure & Exact marginals & \(O(d)\)-equivariant
& Different covariance\\
\midrule
Independent column permutations & Yes & No & Yes\\
Common row permutation & Yes & Yes & No\\
Trace-matched isotropic Gaussian & No & Yes & Yes\\
\bottomrule
\end{tabular}
\end{table}

Two qualifications limit the scope of this impossibility result. First, the theorem concerns data-internal
procedures with exact coordinate-marginal preservation; it does not exclude
equivariant procedures that relax exact marginal preservation or use paired data or
external information. Second, its proof requires the full group \(O(d)\),
whose rotations can map a fixed coordinate direction to any point on the unit
sphere. It therefore applies directly to the RMSNorm reparameterizations in
this study, but not to \(\mathcal G_{\mathrm{LN}}\), which fixes the all-ones
direction.

The results cover the two settings in complementary ways. The general theorem
governs the full orthogonal class used for the RMSNorm models, while
Theorem~\ref{thm:same-spectrum} establishes PA instability directly within the
LayerNorm-compatible subgroup. The analysis now turns from the impossibility result to scores that
are invariant by construction.

\subsection{Invariant Context--Answer Scores}
\label{sec:invariant-scores}

The comparators are constructed using row Gram matrices, a standard way to
remove dependence on feature coordinates in representational similarity
analysis and centered kernel alignment
\citep{kriegeskorte2008rsa,kornblith2019cka}. In particular, for
\(X,Y\in\R^{n\times d}\),
\[
  XX^\top=YY^\top
  \quad\Longleftrightarrow\quad
  Y=XR
\]
for some \(R\in O(d)\). Thus \(XX^\top\) characterizes \(X\) up to a
right-orthogonal transformation. The comparator scores specialize this
established Gram-matrix construction to the relation between retrieved context
and a target answer within one model. This differs from CKA and RSA, which
compare learned representations.

For the paired object \((X,A)\), applying the same fact to the
vertically concatenated matrix gives the block Gram matrix
\[
  \begin{bmatrix}X\\A\end{bmatrix}
  \begin{bmatrix}X\\A\end{bmatrix}^{\!\top}
  =
  \begin{bmatrix}
    XX^\top & XA^\top\\
    AX^\top & AA^\top
  \end{bmatrix}.
\]
The comparator scores use \(XX^\top\), \(AX^\top\), and answer norms, all of
which are blocks or diagonal entries of the joint Gram matrix. The joint Gram-matrix construction makes their invariance explicit before their empirical discrimination is assessed.

The first comparator retains the smallest principal subspace explaining a
fixed fraction of context variance and measures the answer projection onto that
subspace. The second replaces the hard cutoff with ridge shrinkage over all
context directions. Both can be computed from the context Gram matrix,
context--answer inner products, and answer norms. Their full definitions are
provided in Appendix~\ref{app:score-definitions}. These constants are fixed design choices rather than theoretically privileged
values. Specifically, \(\eta=0.9\),
\(\rho_\ell=0.1\bar\lambda_\ell\), and top-quartile token aggregation were
specified before evaluation and were not tuned to the reported results.

By construction, the resulting scores are jointly right-orthogonally
invariant. Specifically,
for every \(R\in O(d)\), every finite collection of layers, and every fixed
input pair,
\[
  s_{\mathrm{variance}}(\{X_\ell R,A_\ell R\}_{\ell\in\mathcal L})
  =
  s_{\mathrm{variance}}(\{X_\ell,A_\ell\}_{\ell\in\mathcal L}),
\]
and
\[
  s_{\mathrm{ridge}}(\{X_\ell R,A_\ell R\}_{\ell\in\mathcal L})
  =
  s_{\mathrm{ridge}}(\{X_\ell,A_\ell\}_{\ell\in\mathcal L}).
\]
Orthogonal invariance follows directly from
\[
  (X_\ell^cR)(X_\ell^cR)^\top
  =
  X_\ell^c(X_\ell^c)^\top,
  \qquad
  (A_\ell^cR)(X_\ell^cR)^\top
  =
  A_\ell^c(X_\ell^c)^\top,
\]
together with preservation of centered answer norms. The baseline correction,
clipping, token aggregation, and cross-layer geometric mean are deterministic,
so they preserve the equality.

Finally, eigenvalue ties are treated explicitly because numerical determinism
alone does not define an invariant spectral projector. The ridge-shrinkage subspace score
remains invariant with repeated eigenvalues. The cumulative-variance
principal-subspace score is invariant under the tie-safe cutoff stated above;
if the cutoff splits a repeated eigenspace, an ordered numerical
eigendecomposition does not provide a basis-independent mathematical
definition.

\section{Empirical Analysis}
\label{sec:empirical}

The empirical analysis tests whether the theoretical coordinate dependence survives the full
measurement pipeline. Specifically, the analysis asks whether reparameterization changes
PA-based measurements, whether randomness or preprocessing can explain the
observed instability, and whether invariant alternatives retain context
discrimination.

\subsection{Study Design}

To follow this propagation from component selection to downstream decisions,
the pre-specified design includes
five models from four architecture families. The models are Qwen2.5-0.5B, Qwen2.5-1.5B, Qwen3-0.6B, SmolLM2-1.7B, and
Pythia-1B
\citep{qwen2024qwen25,qwen2025qwen3,allal2025smollm2,biderman2023pythia}.
Table~\ref{tab:models} states the transformation class used for each model.

\begin{table}[htbp]
\caption{Models and exact residual-stream transformation classes. Parameter
counts are those encoded in the public model identifiers.}
\label{tab:models}
\centering
\begin{tabular}{llll}
\toprule
Model & Architecture & Normalization & Transformation class\\
\midrule
Qwen2.5-0.5B & Qwen2 & RMSNorm & \(O(d)\)\\
Qwen2.5-1.5B & Qwen2 & RMSNorm & \(O(d)\)\\
Qwen3-0.6B & Qwen3 & RMSNorm & \(O(d)\)\\
SmolLM2-1.7B & Llama-style & RMSNorm & \(O(d)\)\\
Pythia-1B & GPT-NeoX & LayerNorm & \(\{R\in O(d):R\1=\1\}\)\\
\bottomrule
\end{tabular}
\end{table}

To vary the form of retrieved evidence, FinanceBench, QASPER, and HotpotQA are used
\citep{islam2023financebench,dasigi2021qasper,yang2018hotpotqa}. FinanceBench
pairs contrast evidence from the target filing with a same-company passage
from a different reporting period; the evidence-ablated condition contains only
the period-mismatched passage. For QASPER, the evidence-containing condition
contains annotated evidence paragraphs plus three sampled non-evidence
paragraphs from the same paper, while the evidence-ablated condition contains
those same non-evidence paragraphs without the evidence. HotpotQA uses the
analogous construction with supporting and distractor documents.

Despite these domain-specific constructions, the paired conditions always
differ by the presence of
annotated evidence, not by the question or evaluated answer. These
conditions are used as operational labels for context sufficiency; they do not
constitute an independent semantic verification that every evidence-containing
context is sufficient.

Within each model--domain combination, 20 question pairs are assigned to
calibration and 50 disjoint pairs to evaluation.
The same target-answer tokens are teacher-forced
under both contexts. Contexts are truncated to 192 tokens. Hidden states are probed at
one-quarter, one-half, and three-quarters of model depth, and the three layer
scores are aggregated with a geometric mean.

Alongside the representation-based scores, answer log-probability is included as a
behavioral baseline. It is the mean teacher-forced log-probability across the
fixed target-answer tokens; larger values indicate the evidence-containing
condition. It uses the same model--domain-specific calibration and evaluation
procedure as the representation-based scores.

To intervene on hidden coordinates while preserving model behavior, five
transformations are generated for each model from fixed random seeds, and each
transformation is reused across that model's three domains. For an RMSNorm
width \(d=mb\), where \(b\) is the largest power of two dividing \(d\), the
implemented matrix is
\[
  R=(Q_m\otimes H_b)\Pi D.
\]
Here \(H_b\) is a normalized Hadamard matrix, \(Q_m\) is obtained by QR
factorization of an \(m\times m\) standard Gaussian matrix with the diagonal
signs of the triangular factor absorbed into \(Q_m\), \(\Pi\) is a uniformly
sampled coordinate permutation, and \(D\) contains independent Rademacher
signs. Thus the RMSNorm intervention is a structured dense orthogonal draw,
not a Haar draw from \(O(d)\). For the studied LayerNorm model, \(Q_\perp\) is sampled Haar-uniformly from
\(O(d-1)\) using the same sign-corrected Gaussian QR construction. The matrix
\(\operatorname{diag}(1,Q_\perp)\) is then embedded in a basis whose first
vector is \(\1/\sqrt d\) and conjugated back. The resulting matrix fixes
\(\1\) and is Haar-uniform on its orthogonal complement. The design contains
25 distinct model--transformation matrices and 75 dependent
model--domain--transformation comparisons.

Before representation stability was measured, numerical verification confirmed that all 25
transformations preserved the evaluated behavior. Greedy output tokens
were unchanged, the maximum absolute teacher-forced log-probability error was
below \(10^{-4}\), and relative logit \(\ell_2\) error was below \(10^{-4}\) on
ten verification prompts. These checks establish numerical preservation of the
evaluated input--output behavior.

Within each transformed or original representation, PA uses 50
column-permutation replicates and examines
\[
  k=\min\{64,\min(n_\ell,d)-1\}
\]
components. The componentwise 0.95 cutoff is estimated using linear interpolation of the
empirical permutation quantiles.
Consecutive-from-largest selection uses strict exceedance and stops at the
first failure; if all \(k\) comparisons succeed, the selected count is \(k\).
The scoring implementation then enforces a minimum count of one. The reported
context component count is the integer median of the three layer-specific
selected counts. In the centering-only control, each original--transformed pair
uses the same item-level permutation seed and the same 50 replicates. Thus its
disagreement cannot be attributed to a seed change between orientations.

After the scores are computed, decision thresholds are selected to maximize
accuracy over the 40
calibration contexts; if multiple cutoffs maximize accuracy, their median is
used. A threshold is selected separately for each score and model--domain
combination on the original parameterization and is not recalibrated after
reparameterization. AUROC is threshold free and is computed on the 100
evaluation contexts in each model--domain combination.

The four primary outcomes comprise context-level selected-component-count
disagreement, absolute score change, fixed-threshold decision disagreement,
and within-question ordering reversal. Because each
question contributes matched evidence-containing and evidence-ablated
contexts, questions rather than individual contexts are used as the resampling
units. For the main stability analysis, uncertainty calculations condition on the
five fixed transformations and use the question as the independent resampling
unit; they quantify variation over questions, not over arbitrary
transformations. Paired resampling retains the matched evidence-containing and
evidence-ablated contexts. The control and discrimination intervals are
descriptive empirical-combination bootstraps over the 12 and 15 observed
model--domain combinations, respectively; because combinations share models
and domains, they are not population-sampling intervals. The robustness analysis uses a question-level bootstrap within its four
representative settings.

\subsection{Reparameterization-Induced Instability}

With the design fixed, instability is first examined under
function-preserving reparameterization.
Table~\ref{tab:stability-results} summarizes the 75
model--domain--transformation comparisons. The most consequential endpoint is
the downstream decision. The median fixed-threshold disagreement rate is 0.26,
whereas an independent-seed rerun without reparameterization changes none of
the 1,200 decisions in the matched control. Pairwise ordering is more stable
than absolute decisions but is not invariant, with a median reversal rate of
0.10. At the component-selection stage, the median comparison has a
context-level disagreement rate of 0.79. Because this binary rate does not
describe magnitude, the 7,500 matched context transformations are also pooled.
The median absolute component-count change is 2
(IQR 1--4; range 0--12). Conditional on a change, the median is 3 (IQR 1--5).
Original component counts have median 39 (IQR 35--43; range 11--53), whereas
reparameterized counts have median 37 (IQR 35--39; range 9--47).

Beyond the absolute magnitude, the signed shift is directional: 5,025 transformations decrease the count,
1,238 increase it, and 1,237 leave it unchanged. The directional asymmetry should be interpreted relative to the intervention. The RMSNorm transformations are
structured randomized Hadamard--block rotations rather than Haar draws from
\(O(d)\); they tend to redistribute coordinate concentration, and the observed
39-to-37 median shift and narrower transformed IQR are consistent with movement
toward the constant-diagonal, diffuse endpoint in
Theorem~\ref{thm:spectral-range}. The theorem's aligned and constant-diagonal
orientations establish extremal reference configurations, not a prediction
that arbitrary Haar transformations must decrease the selected count. Accordingly, the
direction observed here is specific to the sampled transformation family,
whereas disagreement under function preservation is the primary result.

To connect this empirical pattern back to the theory, the spectrum-conditioned lower bound from
Theorem~\ref{thm:spectral-range} is also evaluated at \(\alpha=0.05\) using
the centered, unstandardized covariance \(S=X_c^\top X_c/n\) for the 1,200
original contexts from the four full-\(O(d)\) RMSNorm models. Across 3,600
context--layer spectra, \(m_\alpha(S,n)=1\) for 182 and \(m_\alpha(S,n)=0\)
for 3,418; the 182 positive cases occur in 182 distinct contexts. Thus the
conservative theorem certifies an orbit-wide component-count range of at least
one for 15.2\% of contexts and is inconclusive, rather than contradicted, for
the remaining 84.8\%. The spectral range bound is not applied to Pythia because its
LayerNorm-compatible transformation class is a proper subgroup of \(O(d)\).

\begin{table}[htbp]
\caption{Stability under analytically function-preserving, numerically
verified reparameterizations. The first three rows summarize 75
model--domain--transformation comparison-level rates. The final row pools
7,500 matched context transformations. IQR and range are descriptive, not
confidence limits.}
\label{tab:stability-results}
\centering
\begin{tabular}{lccc}
\toprule
Endpoint & Median & IQR & Range\\
\midrule
PA component-count disagreement & 0.79 & (0.725, 1.000) & (0.59, 1.00)\\
Decision disagreement & 0.26 & (0.160, 0.345) & (0.00, 0.44)\\
Within-question ordering reversal & 0.10 & (0.060, 0.140) & (0.02, 0.24)\\
Absolute component-count change & 2 & (1, 4) & (0, 12)\\
\bottomrule
\end{tabular}
\end{table}

Figure~\ref{fig:stability-heterogeneity} shows that the instability occurs across model--domain combinations rather than being driven by a single setting. Figure~\ref{fig:stability-overview} complements this comparison by showing the distribution, magnitude, and decision-level consequences of the component-count changes.

\begin{figure}[htbp]
\centering
\includegraphics[width=\textwidth]{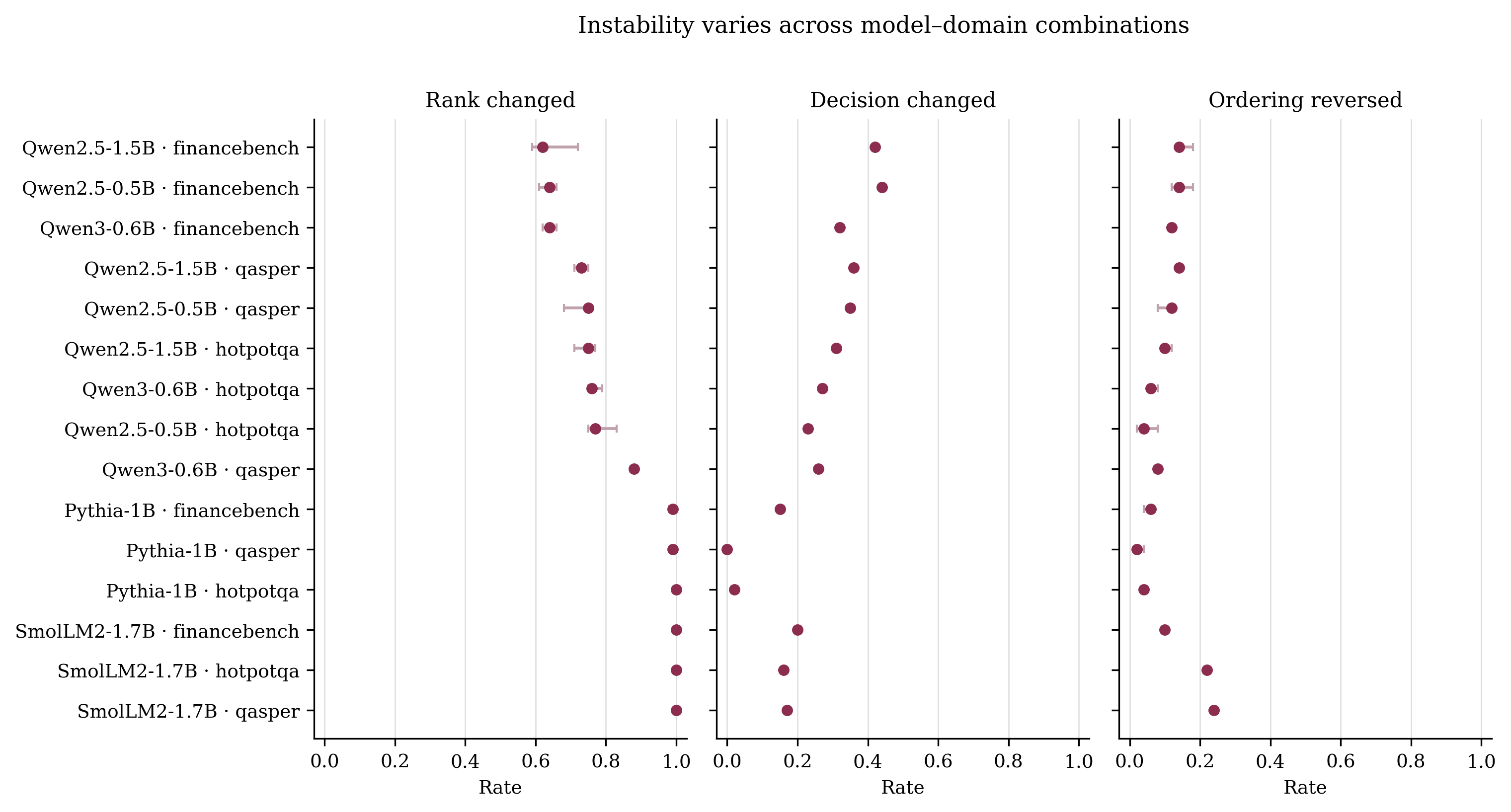}
\caption{Heterogeneity across the 15 model--domain combinations. Each point is
the median across the five fixed transformations; horizontal bars show the
minimum and maximum across those transformations. The ranges describe the five
evaluated transformations and are not confidence intervals for an underlying
transformation population.}
\label{fig:stability-heterogeneity}
\end{figure}

\begin{figure}[htbp]
\centering
\includegraphics[width=\textwidth]{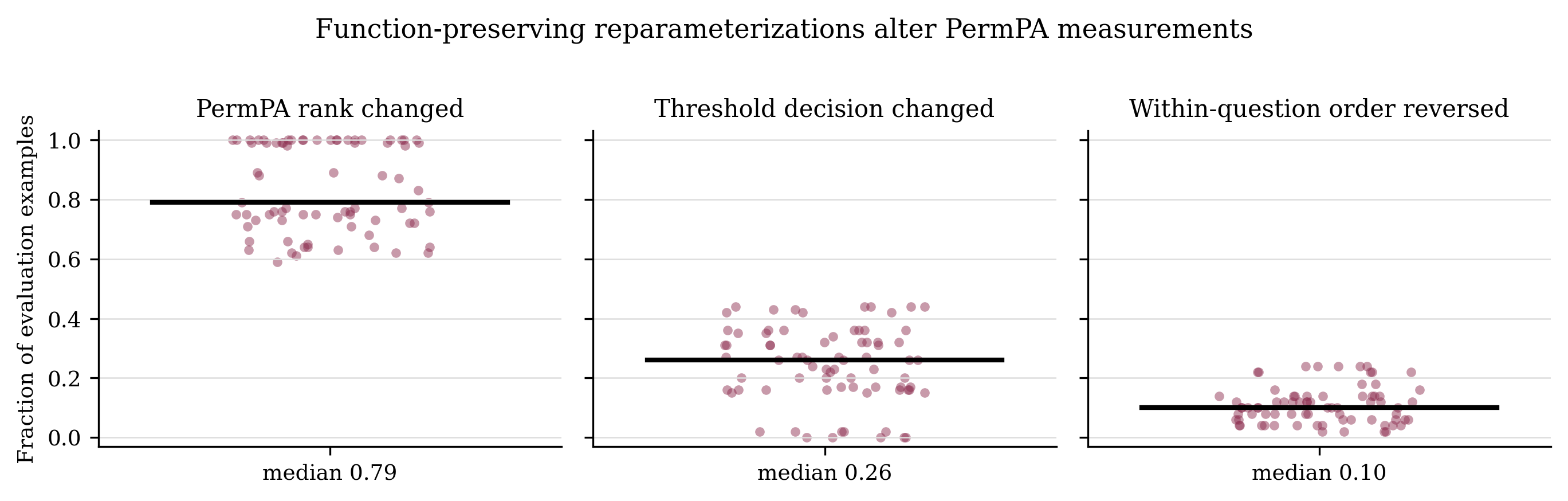}
\caption{Distribution and downstream consequences of component-count changes.
Top left. raw selected component counts for 1,500 original contexts
and 7,500 matched reparameterized contexts. Top right. absolute
changes for the 7,500 matched context transformations. Bottom.
decision disagreement and within-question ordering reversal for each of the 75
model--domain--transformation comparisons; horizontal segments mark medians.
Decisions use the score threshold selected on the original calibration split.}
\label{fig:stability-overview}
\end{figure}

As expected from their construction, the maximum absolute changes of the
cumulative-variance principal-subspace and ridge-shrinkage scores are
\(5.41\times10^{-8}\) and \(4.68\times10^{-8}\), respectively, which are of
numerical-error magnitude.

The PA-selected subspace score has the larger question-level absolute change
than either invariant comparator for all 50 questions in every one of the 75
comparisons. This is a descriptive finite-sample statement; no hypothesis test
is needed for the algebraically invariant comparators.

Importantly, no activation component count is treated as ground truth.
Instead, these comparisons test the more basic requirement that the same input--output function
receive the same measurement.

\subsection{Randomness and Preprocessing Checks}\label{sec:controls}

The next analysis asks whether randomness or preprocessing can account for the observed
effect. Because PA is randomized, a selected component count can change when the
permutation reference distribution is recomputed even if the activations are
held fixed. This baseline variability is quantified in a post-hoc control that covers the 12
model--domain combinations from the four RMSNorm models. For each combination,
PA is recomputed on the original, untransformed activations using an
independent permutation seed and the same 50 permutation-reference replicates.
The comparison with reparameterization is paired by model and domain; the
reparameterization rate is the median across the five fixed transformations.

The comparison in Table~\ref{tab:controls} shows that random-seed variability
is much smaller than
the coordinate effect. The median no-reparameterization seed-change rate is
0.06, whereas the corresponding reparameterization rate is 0.76. The median
paired difference is 0.69, with a model--domain bootstrap 95\% interval of
(0.625,0.880). Reparameterization has the larger rate in all 12
model--domain combinations (two-sided sign-test \(p=4.88\times10^{-4}\)).
Across the 1,200 seed-control contexts, the absolute component-count change has
median 0 and IQR 0--0; only 74 contexts have a nonzero change. Changing only the
permutation seed produces no fixed-threshold decision disagreement in any of
the 1,200 contexts. By comparison, the median reparameterization-induced
decision-disagreement rate across the same 12 model--domain combinations is
0.29.
In this 12-combination, one-rerun control, finite-replicate Monte Carlo
variation is therefore substantially smaller than the observed
reparameterization effect.

\begin{table}[htbp]
\caption{Post-hoc controls across 12 model--domain combinations. Rates are
fractions of 100 evaluation contexts. The reparameterization rate is summarized
over five transformations for the seed comparison. The seed control uses one
independent PA rerun. The preprocessing comparison uses the same single
transformation (seed 101) and permutation seed in its standardized and
centering-only arms.}
\label{tab:controls}
\centering
\begin{tabular}{lc}
\toprule
Quantity & Median\\
\midrule
Reparameterization component-count disagreement & 0.76\\
No-reparameterization independent-seed rate & 0.06\\
Seed-only absolute component-count change & 0 (IQR 0--0)\\
Paired reparameterization-minus-seed difference & 0.69\\
Reparameterization decision disagreement & 0.29\\
No-reparameterization seed-only decision disagreement & 0.00\\
Centering-only reparameterization rate & 1.00\\
Standardized-minus-centering-only difference & \(-0.24\)\\
\bottomrule
\end{tabular}
\end{table}

Figure~\ref{fig:seed-control} visualizes the separation between the reparameterization effect and independent-seed variability for both component counts and downstream decisions.

\begin{figure}[htbp]
\centering
\includegraphics[width=\textwidth]{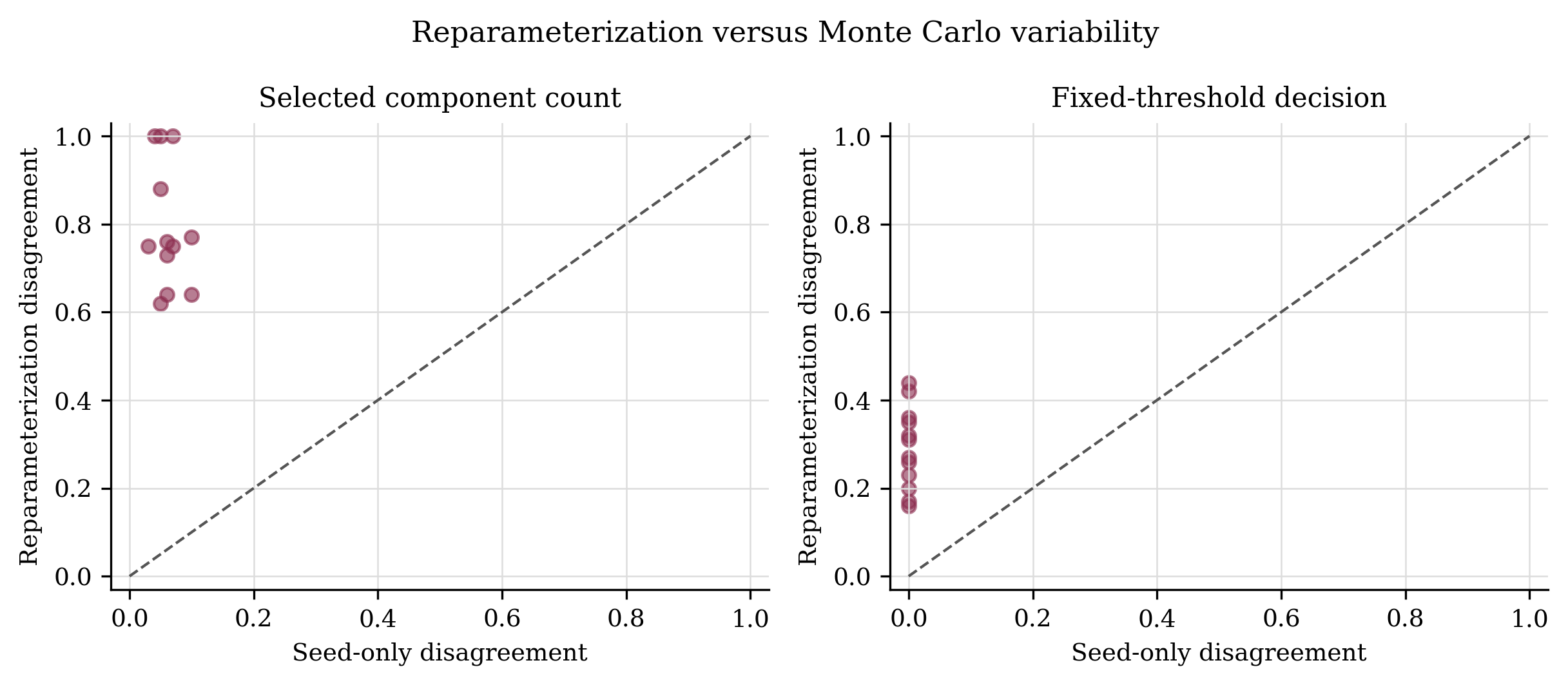}
\caption{Reparameterization-induced versus independent-seed
no-reparameterization disagreement for selected component counts
(left.) and fixed-threshold decisions (right.). Each point is
one of 12 model--domain combinations; the dashed line denotes equality.}
\label{fig:seed-control}
\end{figure}

To isolate the reference-distribution mechanism from preprocessing, the analysis turns
to the centering-only arm, which provides the cleanest empirical test of
Lemma~\ref{lem:orientation} and the PA component-count results in
Theorems~\ref{thm:spectral-range} and \ref{thm:same-spectrum}. It removes
coordinatewise standardization and thereby separates the observed and
reference spectra.
If \(X_c\) denotes the centered context matrix, then the reparameterized
centered matrix is exactly \(X_cR\), and
\[
  (X_cR)^\top(X_cR)=R^\top X_c^\top X_cR.
\]
It follows that the observed eigenvalues are identical in exact arithmetic. Any
change in the selected component count must arise from the
column-permutation reference distribution, apart from numerical error. Direct
numerical comparison across contexts and layers gives maximum relative
observed-eigenvalue error \(1.26\times10^{-6}\) (maximum absolute error
\(4.69\times10^{-2}\), on the unnormalized eigenvalue scale). Despite
the unchanged observed spectrum, 1,141 of 1,200 contexts change selected
component count; the median model--domain disagreement rate is 1.00 (range
0.78--1.00). The corresponding standardized-minus-centering-only median
difference is \(-0.24\) (model--domain bootstrap 95\% interval
\([-0.310,0.025]\)). Coordinatewise standardization is therefore not required
for the instability.

Having isolated the reference distribution under centering alone, the analysis uses the
standardized arm to test a different mechanism. Let
\(\widetilde X=X_cD_X^{-1}\) use the implementation's \(n-1\)-denominator
sample standard deviations and let
\(\widetilde S=\widetilde X^\top\widetilde X/n\). Every diagonal entry then
equals \(c_n=(n-1)/n\), so Lemma~\ref{lem:orientation} gives
\[
  \E_\pi[\widetilde S^\pi]=c_n I_d,
  \qquad
  V(\widetilde S)=d(d-1)c_n^2
\]
in every coordinate system. With \(1/n\)-denominator standard deviations,
these expressions reduce to \(I_d\) and \(d^2-d\), respectively.
Standardization therefore suppresses the first-order,
diagonal-of-covariance mechanism isolated by centering alone. In the
standardized arm, residual instability may reflect a changed observed spectrum
because standardization is not orthogonally equivariant, coordinate-dependent
higher moments of the finite permutation reference, or both. The current
comparison establishes their combined effect but does not attribute it to one
source. The two arms are nevertheless complementary: centering alone directly
confirms reference-driven instability with a fixed observed spectrum, while
the standardized arm shows that the complete implemented pipeline remains
unstable after coordinate variances are equalized.

Figure~\ref{fig:reference-isolation} visualizes this mechanism. Disagreement remains high in the centering-only arm even though the observed covariance spectrum is unchanged.

\begin{figure}[htbp]
\centering
\includegraphics[width=0.58\textwidth]{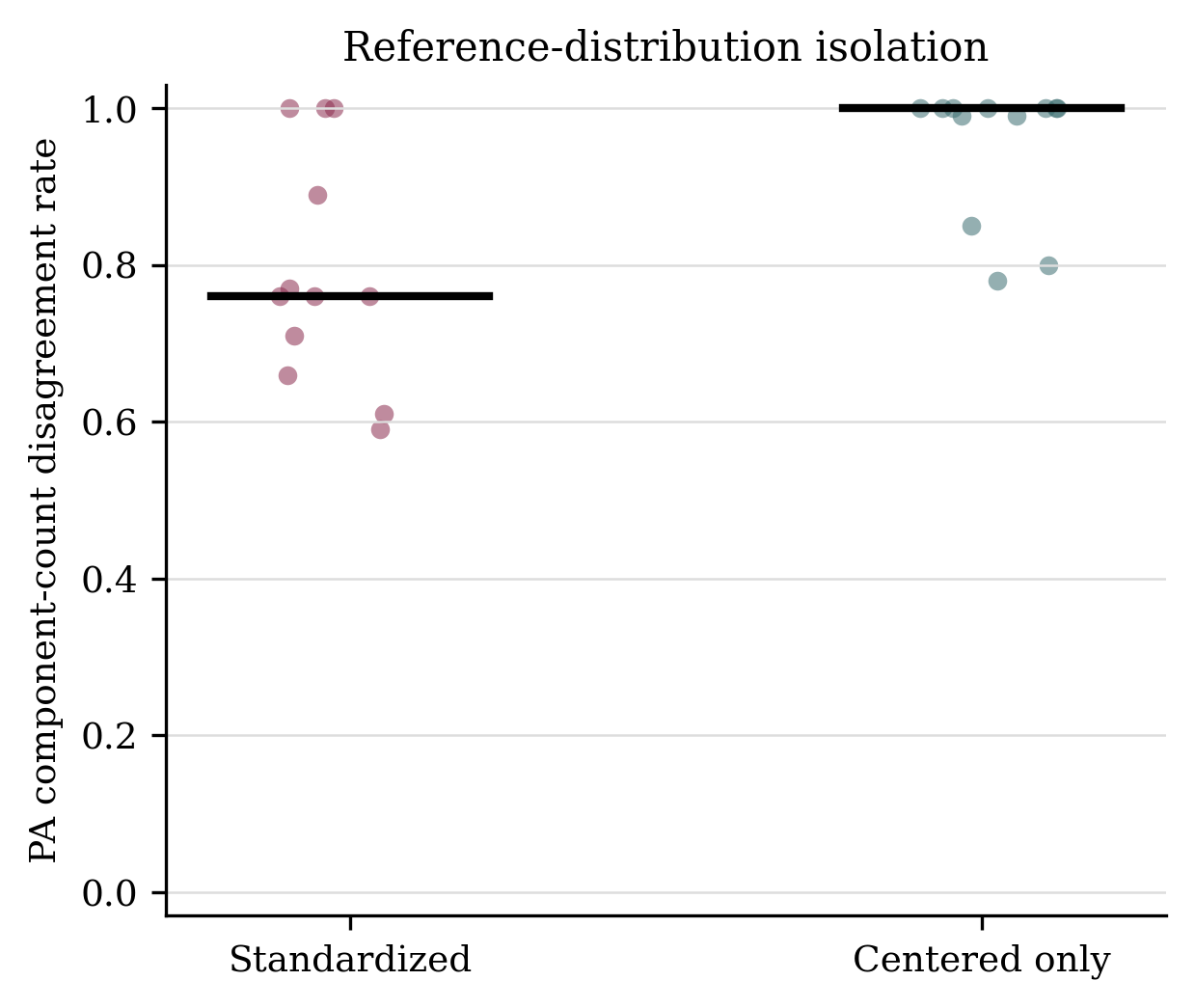}
\caption{Reference-distribution isolation across 12 model--domain
combinations. Under centering alone, the observed covariance spectrum is
exactly invariant to the orthogonal reparameterization, while the
column-permutation reference changes. Points are model--domain disagreement
rates under transformation seed 101; horizontal segments mark medians.}
\label{fig:reference-isolation}
\end{figure}

\subsection{Context Discrimination}

All four scores, including answer log-probability, have median held-out AUROC
within a 0.023 band across the 15 model--domain combinations in
Table~\ref{tab:utility-results}. The ridge-shrinkage subspace score has the
highest median AUROC, 0.712, compared with 0.703 for the PA-selected subspace
score and 0.689 for answer log-probability. The difference between the two
representation-score marginal medians is 0.009.
Across the 15 matched model--domain combinations, the ridge-shrinkage score is
higher in 11, and the median paired difference is 0.028, with a model--domain
bootstrap 95\% interval of
\([-0.003,0.051]\) and a range from \(-0.010\) to 0.080. No score dominates across all model--domain combinations, and the
representation-based scores do not consistently outperform answer
log-probability. The relevant result is therefore not superiority: the tested
right-orthogonally invariant scores remove reparameterization sensitivity
without an observed loss in median discrimination.

\begin{table}[htbp]
\caption{Held-out context discrimination across 15 model--domain
combinations. Ranges show the minimum and maximum AUROC across combinations.}
\label{tab:utility-results}
\centering
\begin{tabular}{lcc}
\toprule
Score & Median AUROC & Range\\
\midrule
PA-selected subspace score & 0.703 & (0.637, 0.813)\\
Cumulative-variance principal-subspace score & 0.697 & (0.664, 0.824)\\
Ridge-shrinkage subspace score & 0.712 & (0.683, 0.827)\\
Answer log-probability & 0.689 & (0.527, 0.839)\\
\bottomrule
\end{tabular}
\end{table}

Figure~\ref{fig:utility-stability} summarizes the central trade-off. The invariant scores remain numerically stable while retaining discrimination comparable to the PA-selected score.

\begin{figure}[htbp]
\centering
\includegraphics[width=0.94\textwidth]{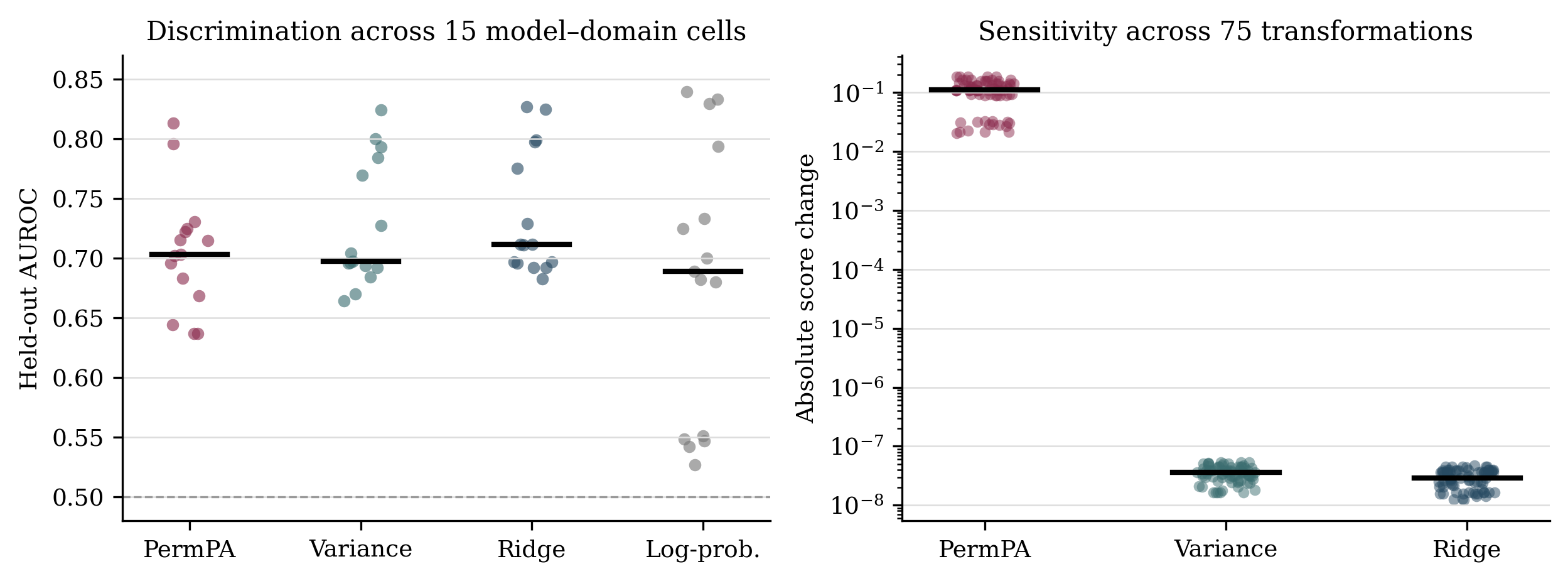}
\caption{Left. held-out discrimination. Each point is one of 15
model--domain combinations, and each horizontal segment is the median.
Right. sensitivity to function-preserving reparameterization. Each
point is one of 75 model--domain--transformation comparisons. PA-selected
subspace values are median absolute score changes across evaluation contexts;
the right-orthogonally invariant comparator-score values are
maximum absolute changes and therefore provide a conservative numerical
stability check.}
\label{fig:utility-stability}
\end{figure}

\subsection{Robustness Analysis}
\label{sec:robustness}

The randomness and preprocessing checks above assess alternative explanations
for the main effect. The robustness analysis instead tests whether the
instability persists across PA settings used in the primary analysis. This
post-hoc sensitivity analysis uses one representative
model--domain combination from each architecture family and one fixed
function-preserving transformation per combination. Each combination contains
the same 100 held-out contexts as the main analysis. The analysis varies the number of
permutation replicates (\(50,100,200\)), the permutation-reference quantile
(\(0.90,0.95,0.99\)), the selection rule (consecutive-from-largest selection
or counting all components exceeding their permutation-reference thresholds),
and whether the selected component count is measured at each probed layer or
by the median across layers. For efficiency and direct comparability, settings
with fewer than 200 replicates use prefixes of the same set of 200
permutation-reference replicates. The complete
\(3\times3\times2\times4\) factorial grid contains 72 settings. All 72
settings reuse the same single fixed transformation within each representative
model--domain combination; the sweep varies analysis choices, not the
transformation draw.

The conclusion remains stable across the tested choices. At the original
setting (50 replicates, the 0.95 quantile, consecutive-from-largest selection,
and median-across-layer aggregation), the median disagreement rate across the
four representative combinations is 0.87 (question-bootstrap 95\% interval
\([0.825,0.910]\)). Across the complete 72-setting grid, the median remains
between 0.86 and 0.99. Increasing the permutation budget or changing the
quantile, selection convention, or aggregation level does not materially
reduce the observed instability. Table~\ref{tab:robustness} reports
baseline-centered one-factor slices, and Figure~\ref{fig:robustness} shows
between-setting heterogeneity.

\begin{table}[htbp]
\caption{Baseline-centered slices of the full factorial sensitivity analysis
in four representative model--domain settings. Each row varies one choice
while holding the others at 50 replicates, the 0.95 permutation-reference
quantile, consecutive-from-largest selection, and median-across-layer
aggregation. Intervals are question-level bootstrap 95\% intervals for the
median across four representative model--domain combinations.}
\label{tab:robustness}
\centering
\small
\begin{tabular}{@{}lcc@{}}
\toprule
Choice & Values & Median rate\\
\midrule
Permutation replicates & 50 / 100 / 200 & 0.870 / 0.865 / 0.865\\
Reference quantile & 0.90 / 0.95 / 0.99 & 0.880 / 0.870 / 0.885\\
Selection rule & Consecutive from largest / count all & 0.870 / 0.870\\
Layer aggregation & Quarter / half / three-quarter / median
  & 0.925 / 0.980 / 0.935 / 0.870\\
\bottomrule
\end{tabular}
\end{table}

\begin{figure}[htbp]
\centering
\includegraphics[width=\textwidth]{robustness_sensitivity.png}
\caption{Context-level PA component-count disagreement rates across analysis
choices. Points are the four representative
model--domain--transformation settings; horizontal segments mark medians. In
each panel, non-displayed choices are held at the main experimental setting.}
\label{fig:robustness}
\end{figure}

\section{Discussion and Limitations}
\label{sec:discussion}

The results support a measurement-validity interpretation rather than a claim
that PA lacks descriptive value. Column-permutation PA can distinguish the two
context conditions, but its selected component counts and downstream decisions
also depend on hidden coordinates that are not fixed by the model's
input--output function. The invariant comparators remove this sensitivity while
retaining similar observed discrimination. All four scores lie within 0.023
median AUROC, and the representation-based scores do not consistently
outperform answer log-probability. The comparators therefore serve as
methodological baselines rather than proposed optimal retrieval scores.

The empirical scope covers five small open-weight models, three
question-answering domains, and the tested architecture-compatible
transformations. The randomness and preprocessing checks use the four RMSNorm
models. The seed control consists of one independent rerun and therefore does
not estimate a distribution over permutation seeds, while the standardization
ablation uses one fixed transformation (seed 101). The robustness grid uses one
fixed transformation for each representative model--domain combination. These
choices provide controlled evidence for the proposed mechanism but do not
exhaust larger models, other tasks, or all function-preserving transformations.
The general marginal-preservation limitation requires the full orthogonal group
\(O(d)\) and has not been extended to the mean-preserving subgroup used for
LayerNorm; the LayerNorm result is instead established by the separate PA
construction in Theorem~\ref{thm:same-spectrum}. Future work can extend the
analysis to broader model families and develop invariant measures with stronger
predictive utility.

\section{Conclusion}
\label{sec:conclusion}

This study shows that column-permutation PA is not invariant under
function-preserving reparameterization. Although model behavior, total
variance, and the observed covariance spectrum remain unchanged, the
coordinate-dependent permutation reference can alter selected component
counts, projection scores, and fixed-threshold decisions. The theoretical
results establish this conflict for the full orthogonal class used with
RMSNorm and through a separate construction for the LayerNorm-compatible
subgroup.

The experiments demonstrate that the effect is larger than the measured seed
variability and persists across the tested PA settings. The median
fixed-threshold decision disagreement rate is 0.26 under reparameterization,
compared with 0.00 in the independent-seed control. Under centering alone,
1,141 of 1,200 component counts change despite an unchanged observed spectrum.
Right-orthogonally invariant comparators remove the tested sensitivity while
retaining similar observed discrimination, although all four scores lie within
0.023 median AUROC and the representation-based scores do not consistently
outperform answer log-probability. Representation-derived statistics should
therefore be evaluated under architecture-compatible, function-preserving
reparameterizations before being interpreted as properties of a model, its
evidence, or its decisions.

\section{Code and Data Availability}
\label{sec:reproducibility}

The protocols, analysis code, intermediate results, and figure-generation
scripts are publicly available at github and can be found here\footnote{\url{https://github.com/abdullahkarasan/representation-measurement-invariance}}. The main experimental
specification was fixed before the reported analyses and is recorded in
\texttt{publication\_protocol.json}; the post-hoc controls and sensitivity
grid are specified separately in \texttt{missing\_controls\_protocol.json}
and \texttt{robustness\_protocol.json}, so that pre-specified and exploratory
analyses remain distinguishable. These files document the models, data splits,
token limit, probed layers, PA settings, and fixed random seeds used in each
analysis. The accompanying scripts reproduce the function-preserving
transformations, their numerical verification, the main experiments, the
control analyses, the spectral-bound evaluation, the robustness analysis, and
all figures. Context-level outputs and aggregate summaries are included, so the
reported tables and figures can be regenerated without repeating model
inference.

\acks{No external funding supported this work. The author declares no
competing interests.}

\appendix
\section{Complete Score Definitions}
\label{app:score-definitions}

This appendix records the complete implemented definitions omitted from the
main theoretical presentation.

\subsection{PA-Selected Subspace Score}

The scorer used in the experiments additionally standardizes coordinates,
enforces a minimum component count of one, and converts the selected subspace
into a context--answer score. The complete pipeline is therefore defined before
testing whether the theoretical mechanism survives these implementation
choices.

Specifically, the implemented PA-selected subspace scorer first maps
\[
  X\longmapsto \mathcal S(X)
  =
  (X-\1\bar x^\top)D_X^{-1},
\]
where \(D_X\) contains coordinatewise sample standard deviations, each floored
at \(10^{-6}\). In general,
\[
  \mathcal S(XR)\neq\mathcal S(X)R.
\]
Consequently, under reparameterization, both the standardized observed
spectrum and its permutation reference may change. The empirical study
therefore tests stability of the complete scoring pipeline, not only the raw
covariance version in Lemma~\ref{lem:orientation}.

To connect component selection to the empirical task, define the complete
PA-selected subspace score as follows. At layer \(\ell\), let
\[
  \widetilde X_\ell
  =
  (X_\ell-\1\bar x_\ell^\top)D_{X_\ell}^{-1},
  \qquad
  \widetilde A_\ell
  =
  (A_\ell-\1\bar x_\ell^\top)D_{X_\ell}^{-1}.
\]
Let \(r_\ell^{\mathrm{PA}}\geq1\) be the PA-selected component count after the
implemented minimum-count rule, and let
\(V_\ell^{\mathrm{PA}}\in\R^{d\times r_\ell^{\mathrm{PA}}}\) contain the
corresponding leading right singular vectors of \(\widetilde X_\ell\). For
target-answer token \(i\), define the projected squared-norm fraction
\[
  q_{\ell i}^{\mathrm{PA}}
  =
  \frac{\lVert\widetilde a_{\ell i}V_\ell^{\mathrm{PA}}\rVert_2^2}
       {\lVert\widetilde a_{\ell i}\rVert_2^2}.
\]
The implementation floors the denominator at \(10^{-12}\), assigning zero to
a zero answer vector.
Writing \(\Agg_i\) for the mean of the largest
\(\max\{1,\lceil m_\ell/4\rceil\}\) token-level values, the layer score is
\[
  e_\ell^{\mathrm{PA}}
  =
  \clip_{[0,1]}
  \left(
    \frac{\Agg_i(q_{\ell i}^{\mathrm{PA}})
          -r_\ell^{\mathrm{PA}}/d}
         {1-r_\ell^{\mathrm{PA}}/d}
  \right).
\]
If \(r_\ell^{\mathrm{PA}}=d\), the score \(e_\ell^{\mathrm{PA}}\) is set to one. The
reported PA-selected subspace score is
\[
  s_{\mathrm{PA}}
  =
  \exp\left\{
    \frac{1}{|\mathcal L|}
    \sum_{\ell\in\mathcal L}
    \log\!\bigl(\clip_{[10^{-6},1]}(e_\ell^{\mathrm{PA}})\bigr)
  \right\}.
\]
The definition of the PA-selected subspace score separates the PA-selected component count from the downstream
projection score whose stability and discrimination are evaluated. It also
exposes the propagation path tested empirically. Reparameterization may change
standardization, component selection, the projection score, and the resulting
fixed-threshold classification.

\subsection{Right-Orthogonally Invariant Comparator Scores}

At each layer \(\ell\), both representation sets are centered using the context
mean:
\[
  X_\ell^c=X_\ell-\1_{n_\ell}\bar x_\ell^\top,
  \qquad
  A_\ell^c=A_\ell-\1_{m_\ell}\bar x_\ell^\top.
\]
Let
\[
  G_\ell
  =
  \frac{1}{n_\ell}X_\ell^c(X_\ell^c)^\top
  =
  U_\ell\Lambda_\ell U_\ell^\top,
\]
where
\(\Lambda_\ell=\diag(\lambda_{\ell1},\ldots,\lambda_{\ell p_\ell})\)
contains the eigenvalues in decreasing order that exceed
\[
  \epsilon_{\mathrm{eig},\ell}
  =
  \epsilon_{\mathrm{mach}}\max\{n_\ell,d\}
  \max\{1,\lambda_{\ell1}\}.
\]
If \(p_\ell=0\), both comparator layer scores are defined as zero. Otherwise,
define
\[
  Z_\ell
  =
  A_\ell^c(X_\ell^c)^\top U_\ell
  (n_\ell\Lambda_\ell)^{-1/2}.
\]
This is \(A_\ell^cV_\ell\), written using only the row Gram matrix and
context--answer inner products.

Let \(r_{\ell,\eta}\) be the smallest \(r\) such that
\[
  \frac{\sum_{j=1}^r\lambda_{\ell j}}
       {\sum_{j=1}^{p_\ell}\lambda_{\ell j}}
  \geq\eta.
\]
For the cumulative-variance principal-subspace score, the theoretical
invariance statement assumes that the cutoff does not split a repeated
eigenvalue:
\[
  \lambda_{\ell,r_{\ell,\eta}}>
  \lambda_{\ell,r_{\ell,\eta}+1},
\]
where \(\lambda_{\ell,p_\ell+1}=0\). This condition makes the selected spectral
projector uniquely defined; the numerical implementation uses the ordered
eigendecomposition at the specified tolerance.
For answer row \(a_{\ell i}^c\), define
\[
  h_{\ell i}
  =
  \frac{\sum_{j=1}^{r_{\ell,\eta}}Z_{\ell,ij}^2}
       {\lVert a_{\ell i}^c\rVert_2^2},
  \qquad
  g_{\ell i}
  =
  \frac{\sum_{j=1}^{p_\ell}
  \frac{\lambda_{\ell j}}{\lambda_{\ell j}+\rho_\ell}
  Z_{\ell,ij}^2}
  {\lVert a_{\ell i}^c\rVert_2^2}.
\]
As implemented, each denominator is floored at \(10^{-12}\); a zero
centered-answer row therefore contributes zero rather than an undefined ratio.
Here \(\eta=0.9\),
\(\rho_\ell=0.1p_\ell^{-1}\sum_{j=1}^{p_\ell}\lambda_{\ell j}\), and
\(\Agg\) denotes the mean of the largest
\(\max\{1,\lceil m_\ell/4\rceil\}\) token-level values.  The layer scores are
\[
  e_\ell
  =
  \clip_{[0,1]}
  \left(
  \frac{\Agg_i(h_{\ell i})-r_{\ell,\eta}/d}
       {1-r_{\ell,\eta}/d}
  \right),
  \qquad
  g_\ell^\star
  =
  \clip_{[0,1]}\!\left(\Agg_i(g_{\ell i})\right).
\]
Thus \(h_{\ell i}\) uses a hard principal-subspace cutoff, whereas
\(g_{\ell i}\) applies a smooth ridge shrinkage across all context directions.
The quantity \(r_{\ell,\eta}/d\) is the expected projected fraction for a
Haar-uniform random \(r_{\ell,\eta}\)-dimensional subspace of \(\R^d\); it is not estimated
from permuted data. The analysis assumes \(r_{\ell,\eta}<d\); if the selected subspace
spans all \(d\) coordinates, define the cumulative-variance principal-subspace
score as one and omit the baseline correction.
For a fixed set of probed layers \(\mathcal L\), the reported scores are
\[
  s_{\mathrm{variance}}
  =
  \exp\left\{
  \frac{1}{|\mathcal L|}
  \sum_{\ell\in\mathcal L}\log\!\bigl(\clip_{[\epsilon,1]}(e_\ell)\bigr)
  \right\},
\]
\[
  s_{\mathrm{ridge}}
  =
  \exp\left\{
  \frac{1}{|\mathcal L|}
  \sum_{\ell\in\mathcal L}
  \log\!\bigl(\clip_{[\epsilon,1]}(g_\ell^\star)\bigr)
  \right\},
  \qquad \epsilon=10^{-6}.
\]

\section{Proofs of the Main Technical Results}
\label{app:proofs}

This appendix collects the longer proofs omitted from the main presentation.

\subsection{Proof of Lemma~\ref{lem:null-concentration}}
\label{app:null-concentration}

\begin{proof}
For \(j\neq k\), the \((j,k)\) entry of \(S^\pi\) has the same distribution as
\(x_j^\top P x_k/n\), where \(P\) is uniform.  Sampling without replacement
and column centering give
\[
  \E_\pi\!\left[
    \left(\frac{x_j^\top P x_k}{n}\right)^2
  \right]
  =
  \frac{S_{jj}S_{kk}}{n-1}.
\]
The diagonal of \(S^\pi\) equals that of \(S\) deterministically. Summing the
off-diagonal second moments proves the identity. Since
\(\lVert M\rVert_\op\leq\lVert M\rVert_\F\), Markov's inequality applied to
the squared Frobenius norm proves the probability bound. Weyl's inequality
gives the simultaneous eigenvalue statement. Taking \(\delta=\alpha\) under
the quantile convention in Lemma~\ref{lem:null-concentration} gives the
threshold bound.
\end{proof}

\subsection{Proof of Theorem~\ref{thm:spectral-range}}
\label{app:spectral-range}

\begin{proof}
Choose an orthogonal matrix \(R_{\mathrm{loc}}\) whose first coordinate is a
leading eigenvector of \(S\). The first diagonal entry of
\(R_{\mathrm{loc}}^\top S R_{\mathrm{loc}}\) is then \(\lambda_1(S)\).
Column permutation preserves every covariance diagonal entry exactly, so every
reference realization has largest eigenvalue at least \(\lambda_1(S)\).
Consequently \(t_1(XR_{\mathrm{loc}})\geq\lambda_1(S)\), the strict first
comparison fails, and
\(\widehat r_{\mathrm{PA}}(XR_{\mathrm{loc}})=0\).

By the Schur--Horn theorem, there is also an orthogonal matrix
\(R_{\mathrm{flat}}\) for which every diagonal entry of
\(R_{\mathrm{flat}}^\top S R_{\mathrm{flat}}\) equals
\(\bar\lambda=\tr(S)/d\). At this orientation,
\[
  V(R_{\mathrm{flat}}^\top S R_{\mathrm{flat}})
  =
  (\tr S)^2-\frac{(\tr S)^2}{d}.
\]
Lemma~\ref{lem:null-concentration} therefore gives
\(t_k(XR_{\mathrm{flat}})\leq c_\alpha(S,n)\) simultaneously for every
componentwise threshold. The observed eigenvalues remain
\(\lambda_k(S)\). By the definition of \(m_\alpha(S,n)\), the first
\(m_\alpha(S,n)\) observed eigenvalues strictly exceed their thresholds, so
\(\widehat r_{\mathrm{PA}}(XR_{\mathrm{flat}})\geq m_\alpha(S,n)\).
Combining the two orientations proves the result.
\end{proof}

\subsection{Proof of Theorem~\ref{thm:same-spectrum}}
\label{app:same-spectrum}

\begin{proof}
Both observed covariance matrices have spectrum
\((\lambda,0,\ldots,0)\). Moreover,
\(\lVert e_1\rVert=\lVert w\rVert=1\) and
\(\1^\top e_1=\1^\top w=1\). The subgroup of orthogonal transformations that
fixes \(\1\) acts transitively on unit vectors having the same inner product
with \(\1\), so there is an \(R\in\mathcal G_{\mathrm{LN}}\) with
\(R^\top e_1=w\).

For \(X_{\mathrm{loc}}\), every reference realization
has the same spectrum because its only nonzero column retains squared norm
\(n\lambda\). Thus \(t_1=\lambda\), and the strict comparison fails.

For \(X_{\mathrm{mix}}\), the largest covariance diagonal entry is
\(4\lambda/9\), and
\[
  V(S)
  =
  \lambda^2\left(1-\sum_{j=1}^d w_j^4\right)
  =
  \frac{16}{27}\lambda^2.
\]
Lemma~\ref{lem:null-concentration} gives
\[
  t_1
  \leq
  \frac{4\lambda}{9}
  +
  \lambda
  \sqrt{\frac{16}{27(n-1)\alpha}}.
\]
The sample-size condition makes this bound strictly smaller than \(\lambda\),
so the first component is retained. The second observed eigenvalue is zero
and its reference threshold is nonnegative, so the procedure stops after one
component.
\end{proof}

\subsection{Proof of Theorem~\ref{thm:single-view-limit}}
\label{app:single-view-limit}

\begin{proof}
Fix a unit vector \(v\) and choose \(R\in O(d)\) such that \(Re_1=v\).
Exact marginal preservation applied to \(XR\) implies that the first projected empirical
measure of a draw from \(\mathcal K(XR,\cdot)\) equals
\(\widehat\mu_{Xv}\) almost surely. Right-orthogonal equivariance
transfers this
degenerate distributional statement to
\[
  \widehat\mu_{Zv}=\widehat\mu_{Xv}
\]
for \(Z\sim\mathcal K(X,\cdot)\). Apply the argument on a countable dense
subset of the unit sphere and intersect the corresponding probability-one
events. Continuity in \(v\) extends the identity to every direction.
The Cram\'er--Wold theorem then yields
\(\widehat\mu_Z=\widehat\mu_X\). Equality of finite empirical measures with
equal atom weights implies equality of their row multisets, hence \(Z=PX\).
\end{proof}

\bibliography{paper_references}
\end{document}